 \setlist[description]{style=nextline}
\newcommand{\Desc}[2]{\State \makebox[4em][l]{#1}#2}
\newcommand{\PIWP}{\mathsf{PIWP}}
\newcommand{\Bayes}{\mathsf{Bayes}}
\newif\ifcomment
\newcommand{\melissa}[1]{\textcolor{magenta}{\textbf{Melissa: #1}}}
\newcommand{\esha}[1]{\textcolor{cyan}{\textbf{Esha: #1}}}
\newcommand{\Snote}[1]{\textcolor{red}{\textbf{Saeed:#1}}}
\newcommand{\melissa}[1]{\textcolor{magenta}{}}
\newcommand{\esha}[1]{\textcolor{cyan}{}}
\newcommand{\Snote}[1]{\textcolor{red}{}}
\newcommand{\tD}{\tilde{D}}
\newcommand{\tX}{\tilde{X}}
\newcommand{\tY}{\tilde{Y}}
\newcommand{\Risk}{\mathsf{Risk}}
\newcommand{\crt}{\mathsf{crt}}
\newtheorem{theorem}{Theorem}
\newcommand{\fhat}[2]{\ifthenelse{\equal{#2}{}}{\hat{f}[#1]}{\ifthenelse{\equal{#2}{0}}{\hat{f}[\emptyset]}{\hat{f}[#1_{\leq #2}]}}}
\newcommand{\gain}[2]{\ifthenelse{\equal{#2}{}}{g[#1]}{g[#1_{\leq #2}]}}
\newcommand{\pr}[2][]{\Pr_{\ifthenelse{\isempty{#1}}{}{{#1}}}\left[{#2}\right]}
 \newcommand{\ANDT}{\wedge}
\newcommand{\remove}[1]{}
\newcommand{\set}[1]{\{ #1 \}}
\newcommand{\R}{{\mathbb R}}
\newcommand{\N}{{\mathbb N}}
\newcommand{\cD}{{\mathcal D}}
\newcommand{\cH}{{\mathcal H}}
\newcommand{\cT}{{\mathcal T}}
\newcommand{\cX}{{\mathcal X}}
\newcommand{\cY}{{\mathcal Y}}
\newcommand{\eps}{\varepsilon}
\newcommand{\Exp}{\operatorname*{\mathbf{E}}}
\newcommand{\Ex}{\Exp}
\newcommand{\Supp}{\mathrm{Supp}}
\newcommand{\argmax}{\operatorname*{argmax}}
\newtheorem{claim}[theorem]{Claim}
\newtheorem{lemma}[theorem]{Lemma}
\newtheorem{corollary}[theorem]{Corollary}
\newtheorem{definition}[theorem]{Definition}
\newtheorem{example}[theorem]{Example}
\newtheorem{remark}[theorem]{Remark}
\newcommand{\sdotfill}{\textcolor[rgb]{0.8,0.8,0.8}{\dotfill}} 
\newtheorem{proto}[theorem]{Protocol}
\newtheorem{protoc}[theorem]{Protocol}
\newcommand{\namedref}[2]{#1~\ref{#2}}
\newcommand{\torestate}[3]{%
\expandafter \def \csname BBRESTATE #2 \endcsname{#3}
\theoremstyle{plain}
\newtheorem{BBRESTATETHMNUM#2}[theorem]{#1}
\begin{BBRESTATETHMNUM#2}\label{#2}\csname BBRESTATE #2 \endcsname   \end{BBRESTATETHMNUM#2}
\newtheorem*{BBRESTATETHMNONNUM#2}{\namedref{#1}{#2}}
}
\newcommand{\restate}[1]{\begin{BBRESTATETHMNONNUM#1}[Restated] \csname BBRESTATE #1 \endcsname
\end{BBRESTATETHMNONNUM#1}}
\newcounter{definitioncnt}
\newcounter{thmcnt}
\newcounter{prblm}
\newcommand{\myparagraph}[1]{\smallskip\noindent\textbf{#1}}
\newcommand{\poison}{\mathsf{poison}}
\begin{document}
%
\title{Property Inference from Poisoning}

 \author{\IEEEauthorblockN{Melissa Chase}
 \IEEEauthorblockA{
 Microsoft Research\\
 melissac@microsoft.com}
 \and
 \IEEEauthorblockN{Esha Ghosh}
 \IEEEauthorblockA{Microsoft Research\\
 esha.ghosh@microsoft.com}
 \and
 \IEEEauthorblockN{Saeed Mahloujifar}
 \IEEEauthorblockA{Princeton University\\
 sfar@princeton.edu}}


%


\maketitle

\begin{abstract}

A major concern in training and releasing machine learning models is to what extent the model contains sensitive information that the data holders do not want to reveal. Property inference attacks consider an adversary who has access to the trained model and tries to extract some global statistics of the training data. In this work, we study property inference in scenarios where the adversary can maliciously control part of the training data (poisoning data) with the goal of increasing the leakage.   

Previous work on poisoning attacks focused on trying to decrease the accuracy of models either on the whole population or on specific sub-populations or instances. Here, for the first time, we study poisoning attacks where the goal of the adversary is to increase the information leakage of the model. Our findings suggest that poisoning attacks can boost the information leakage significantly and should be considered as a stronger threat model in sensitive applications where some of the data sources may be malicious.

We first describe our \emph{property inference poisoning attack} that allows the adversary to learn the prevalence in the training data of any property it chooses: it chooses the property to attack, then submits input data according to a poisoned distribution, and finally uses \emph{black box queries} (label-only queries) on the trained model to determine the frequency of the chosen property. We theoretically prove that our attack can always succeed as long as the learning algorithm used has good generalization properties. 

We then verify effectiveness of our attack by experimentally evaluating it on two datasets: a Census dataset and the Enron email dataset. In the first case we show that classifiers that recognizes whether an individual has high income (Census data) also leak information about the race and gender ratios of the underlying dataset. In the second case, we show classifiers trained to  detect spam emails (Enron data) can also reveal the fraction of emails which show negative sentiment (according to a sentiment analysis algorithm); note that the sentiment is not a feature in the training dataset, but rather some feature that the adversary chooses and can be derived from the existing features (in this case the words). Finally, we add an additional feature to each dataset that is chosen at random, independent of the other features, and show that the classifiers can also be made to leak statistics about this feature; this shows that the attack can target features completely uncorrelated with the original training task. We were able to achieve above $90\%$ attack accuracy with $9-10\%$ poisoning in all of these experiments.

\end{abstract}


%
\maketitle

\section{Introduction}

    Machine learning is revolutionizing nearly every discipline from healthcare to finance to manufacturing and marketing. However, one of the limiting factors in ML is availability of large quantities of quality data. 
 
    This has prompted calls for collaborative learning, where many parties combine datasets to train a joint model~\citep{ericsson,medium}.  However, much of this data involves either private data about individuals or confidential enterprise information. In order to use data without revealing sensitive information about the training data, the field of privacy preserving machine learning is developing fast with significant progress on technologies like differential privacy and cryptography based learning systems.

    \myparagraph{Inference Attacks:} Inference attacks consider an adversary who tries to infer sensitive information about the training set by inspecting the model that is trained on it. Inference attacks have come in two main flavors: membership inference~\cite{shokri2017membership} and property inference attacks~\cite{ateniese2013hacking}.
    
    In a membership inference attack, an adversary tries to infer if a special instance was present in the training set that was used to train a given model.  In property inference attacks, the adversary tries to infer some aggregate information about the whole training set. While there are some promising approaches for defending against membership inference attacks (e.g. differential privacy), there is no general defense mechanism known against property inference attacks and how to defend against them is still an open question. In this work, we focus on property inference attacks in collaborative learning scenarios and show that these attacks are more devastating than previously thought.
    
    Note that the property being inferred need not be an explicit feature in the training set, nor does it need to be obviously correlated with the training set labels.  For example, we will consider a property inference attack on a text based model (in particular a spam classifier), which attempts to learn the average sentiment (positive or negative) of the documents in the training dataset.

    \myparagraph{Poisoning Attacks}  In poisoning attacks, some part of training data (poisoning data) is carefully chosen by an adversary who wishes to make the trained model behave in his own interest. A considerable body of works \cite{biggio2012poisoning,steinhardt2017certified,shafahi2018poison,mahloujifar2019curse, suya2020model, bhagoji2019analyzing, bagdasaryan2020backdoor} have shown that poisoning attacks can significantly hurt accuracy of ML models. 
    
    \myparagraph{Poisoning Attacks Increasing Information Leakage}
    In this work we initiate the study of poisoning attacks that aim at increasing the information leakage in ML models. In particular, we ask the following question:
    
        {\center{\emph{Can adversaries boost the performance of property inference attacks by injecting specially crafted poison data in the training set?}}}
        
    This is a very relevant question whenever data is gathered from multiple sources, some of which may be adversarially controlled.  In particular, it is relevant in collaborative machine learning, where one party might contribute malicious data in order to learn some property about the rest of the training set. 
    
    To follow our above example, if a group of small companies pool their data to train a spam classifier, one company might contribute poisoned data in an attempt to learn about the average sentiment in the rest of the group. This could give that company a significant edge in understanding it's competitors' positions.
    

    We note that the above question could also be asked for membership inference attacks.  While that is an interesting direction, in this paper we only focus on property inference attacks. We show that poisoning can indeed increase the property leakage significantly.

\myparagraph{Attack Model:} In this paper we consider an attacker who is allowed to first submit a set of ``poisoned'' data of its choice, which will be combined with the victim dataset and used to train a model. Note that our adversary is oblivious to the ``clean'' training set sampled from other sources\footnote{Oblivious poisoning attacks are provably weeker than non-oblivious poisoning attacks in what they can achieve ~\cite{garg2020obliviousness}.}.  The attacker can then make a series of black box queries to the trained model. Note that by black box queries we mean that the adversary only gets to learn the predicted label for each query. (It does not, for example, get the confidence values that the model produces, or any internal information from the model.)  Finally, as in the property inference attacks of \cite{propinf18, ateniese2013hacking}, the attacker's goal is to infer whether the average of a particular property is above or below a particular threshold. This is a very natural model for poisoning in the distributed learning setting where data is gathered from many sources.

\myparagraph{Main Contributions:}
\begin{itemize}
    \item{Theoretical Results:} We first describe a theoretical attack that works as long as the training algorithm outputs a Bayes-optimal classifier. The high level idea of our attack is that the adversary adds poisoning points in a way that causes the behavior of the Bayes-optimal classifier to depend on the average of the target property. In particular, we show that poisoning would change the prediction of certain instances when the average of property is below the threshold. But when the average is higher than the threshold, then the poisoning does not change the prediction of those points. See Section \ref{sec:theoreticalattack} for details of our analysis.

    We formalize this intuition 
    by giving a concrete attack based on our theoretical analysis in this model and analyzing it's effectiveness. Our attack is agnostic to the architecture of the trained model since it is completely black box. Note that poisoning is a crucial aspect of our theoretical analysis and the information leakage does not necessarily exist if we do not allow the adversary to poison the data. 

    \item{Experimental Results:} Real training algorithms do not always output Bayes optimal classifiers, so there is a question about whether the above results hold in practice.  To explore this we implement our attack on several target properties on a Census dataset and on the Enron email dataset; in the first setting we consider a model trained to predict whether an individual with a given feature set has high income. In the second setting, we consider a spam classifier trained on the emails. The objective of the attacker in all these experiments is to distinguish whether or not the target property appears with high frequency in the dataset. Our attack can successfully distinguish various ranges of \emph{higher} vs. \emph{lower} frequencies, e.g. ($5\%$ from $10\%$), ($40\%$ from $60\%$), ($30\%$ from $70\%$). We experiment with three types of target properties: 
        \begin{enumerate}
            \item Properties that are present in the training dataset as a feature, e.g., Gender and Race in the Census dataset.
            \item Properties that are not present as a feature in the training data, but which may be derived from those features e.g., Negative sentiment in emails (as determined by sentiment analysis) in the Enron dataset.
            \item Properties that are uncorrelated with the rest of the training data or classification task: for this experiment, we added an independently chosen random binary feature to each data entry in both the Census and the Enron data.   
        \end{enumerate}
    Experimenting with these variety of target properties demonstrate the power of our attack.  In our attacks, we were able to achieve above $90\%$ attack accuracy with about $10\%$ poisoning in all of these experiments. We run our experiments for logistic regression and also for fully connected neural networks (See Section \ref{sec:expt}.). 
    The whitebox attack in~\cite{PIA18} used Gender and Race as their target properties on Census dataset, so these experiments provide a reference for comparison, and show that with moderate poisoning $5-10\%$  our attack significantly improves  the performance both in accuracy and running time, and of course in the access model which is fully black box.  Moreover, we implemented the attack of ~\cite{PIA18} on our random features (the third type listed above), and show that the attack accuracy is barely above $50\%$, while our attack with $5-10\%$ poisoning is able to achieve near perfect accuracy.

\end{itemize}

\subsection{Organization}

The rest of the paper is organized as follows. In Section~\ref{sec:related}, we discuss the related work. In Section~\ref{sec:threat} we describe our threat model and formalize it. In Section~\ref{sec:theoreticalattack} we describe the theoretical attack for Bayes Optimal Classifiers, and in Section~\ref{sec:concreteattack} we show our concrete attack. We discuss the experiments in Section~\ref{sec:expt} and conclude with some future directions to explore in Section~\ref{sec:conclusion}.

\section{Related Work} \label{sec:related}
It is quite well known by now that understanding what ML models actually memorize from their training data is not trivial. As discussed above, there is a rich line of work that tries to investigate privacy leakage from ML models  under different threat models. Here we provide some more detail on several of the works which seem most related to ours.  
For a comprehensive survey on the other privacy attacks on neural networks, please see~\cite{he2019privacy}. 

The global property inference attacks of~\cite{PIA15,PIA18} are the most relevant to us: here the adversary's goal is to infer sensitive global properties of the training dataset from the trained model 
that the model producer did not intend to share. We have already described some examples above.  Property inference attacks were first formulated and studied in~\cite{PIA15}. However, this initial approach did not scale well to deep neural networks, so ~\cite{PIA18} proposed a modified attack that is more efficient. The main differences from our attack are in the threat model: 1) our adversary can poison a portion of the training data and 2) in \cite{PIA15,PIA18} the adversary has whitebox access to the model meaning that it is given all of the weights in the neural net, while our adversary has only blackbox access to the trained model as described above. We experimentally compare our attack performance and accuracy with that of~\cite{PIA18} in Section~\ref{sec:expt}.

Another closely related attack is the more recent subpopulation attack~\cite{SubpopulationDP}. Here the adversary's goal is to poison part of the training data in such a way that only the predictions on inputs coming from a certain subpopulation in the data are impacted. To achieve this, the authors poison the data based on a filter function that specifies the target subpopulation. However, the goal of these subpopulation attacks is  to attack the accuracy of the model.

In~\cite{melis19} the authors studied property leakage in the federated learning framework.  In federated learning the process proceeds through multiple rounds.  In each round each of $n>2$ parties takes the intermediate model and uses their own data to locally compute an update.  These updates are all collected by a central party and used to form the next intermediate model.  The threat model in \cite{melis19} is the following: $n$ parties participate in a ML training using federated learning where one of the participant is the adversary. The adversary uses the model updates revealed in each round of the federated training and tries to infer properties of the training data that are true of a subpopulation but not of the population as a whole. We note that in this threat model, the adversary gets to see more information than on our model, so this result is not directly comparable to ours.

We also note that our work is similar to~\cite{optimalGB} in spirit, where the authors develop Bayes optimal strategies for a \emph{grey box} membership inference attack\footnote{We note that, even though the authors claimed that there results hold for black-box attack, in reality, they addressed grey-box attacks as the adversary in their model gets the confidence scores for different classes and not just the prediction.}. The attack the authors study is membership-inference (as opposed to property inference), so the goal and technique of the attack and the assumptions they make about the model are completely different from ours. 

Another related work is the recent work of \citep{jagielski2020auditing} that studies poisoning attacks as a tool to evaluate differential privacy and obtain lower bounds for differential privacy of existing algorithms. This poisoning strategy improved the previous lower bound on the state of the art differentially private algorithms \citep{jayaraman2019evaluating}.

We also have to mention some of different type of  attacks that utilize the information leakage of the learning algorithms. Membership inference attacks try to infer whether a specific example has been in the training set or not are distinct from us in that the goal of attacker is different but are similar to our setting in assumptions behind the attack \cite{shokri2017membership,choo2020label,nasr2018machine,song2019auditing}. Model inversion attacks try to reconstruct part of the training set using access to the model which is a very strong attack model but only works on models that memorize too much about data  \cite{fredrikson2015model}. Backdoor attack that try to inject backdoors  through poisoning the training set and trigger the backdoors could in the test time. These attacks are similar to us as both attacks leverage poisoning \cite{chen17,goldblum2020data}. Another important attack models against machine learning is the study of adversarial examples where an attacker tries to add small perturbations during inference and cause the model to misclassify/mispredict the input\cite{goodfellow2014explaining,madry2017towards,ilyas2019adversarial}. Interestingly, the implications of these attacks and defenses proposed against them on privacy and membership inference is studied in \cite{song2019privacy}.

Finally, we want to mention the line of work on poisoning attacks with provable guarantees. There is a line of work on poisoning attacks against \emph{any} classifier  where the goal of adversary is to increase the probability of an undesirable property over the final model~\cite{mahloujifar2017blockwise,mahloujifar2018learning,mahloujifar2019can,diochnos2019lower,mahloujifar2019data,mahloujifar2020learning}. Since these attacks are very general, the adversary can set this undesirable property to be about the leakage of final model. However, these attacks require the adversary to know the ``clean'' dataset before injecting/replacing the poisoning examples. Moreover, they require the learning algorithm to be deterministic and also have some original probability of the bad property happening even when data is provided in a benign fashion. Our attack is distinct from those attacks as it is oblivious to the clean dataset. Also, our attack only requires the learning algorithm to have some generalization properties and does not need it to be deterministic or having some original chance of outputting a leaky model. Another line of work tries to show poisoning attacks with provable guarantees against learning algorithms that use convex loss minimization \cite{steinhardt2017certified,koh2018stronger,suya2020model}. Our attack is distinct from these results in that our attacks have provable gaurantees against \emph{any} learning algorithm that has good generalization and also it does not need the knowledge of clean training set. On the positive side, the work of \citep{diakonikolas2019robust} showed the power of robust statistics in overcoming poisoned data in polynomial-time. These works led to an active line of work \cite{diakonikolas2019sever,prasad2018robust,lai2016agnostic} exploring the possibility of robust statistics over poisoned data with algorithmic guarantees. These results do not apply to our settings as they either are about robust parameter estimation of distributions or about accurate classification in presence of outliers.

\section{Property Inference Attacks}
    There has been a series of work looking at to what extent a model leaks information about a certain \emph{individual record} in the training set, including work on using differential privacy~\cite{DMNS06} to define what it means for a training algorithm to preserve privacy of these individuals and technically how that can be achieved. However, leaking information on individuals is not the only concern in this context.  In many cases even the aggregate information is sensitive.

This type of aggregate leakage inspires a line of work started in~\cite{PIA15,PIA18} that looks at \emph{property inference} attacks, in which the attacker is trying to learn aggregate information about a dataset.  In particular, we focus here, as did~\cite{PIA18}, on an attacker who is trying to determine the frequency of a particular property in the dataset used to train the model.  Notice that this type of aggregate leakage is a global property of the training dataset and is not mitigated by adding differential-privacy mechanisms.

\myparagraph{Does property inference pose an important threat model?} Property inference attacks could reveal very sensitive in formation about the dataset. To illustrate the importance of the attack model, we provide some examples of such sensitive information that could be revealed. For further discussion on the importance of these attacks we refer the reader to previous work of \cite{propinf18} and \cite{ateniese2013hacking}.

\begin{example}
    Imagine a company wants to use its internal emails to train a spam classifier. Such a model would be expected to reveal which combination of words commonly indicate spam, and companies might be comfortable sharing this information.  However, using a property inference attack, the resulting model could also leak information about the aggregate sentiment of emails in the company, that could potentially be very sensitive. For example, if the sentiment of emails in the company turn negative near the financial quarter, it could mean that the company is performing below expectations.  
\end{example}
\begin{example}
    Similarly, a financial company might be willing to share a model to detect fraud, but might not be willing to reveal the volume of various types of transactions. 
\end{example}

\begin{example}
    Or a number of smaller companies might be willing to share a model to help target customers for price reductions etc, however such companies might not be willing to share specific sales numbers for different types of products.
\end{example}

       \myparagraph{Property leakage from poisoned datasets}
    One of the questions that is not addressed in previous work on property inference attacks is scenarios where adversary can contribute to the part of the training set.  This could occur either because one of the parties in  collaborative training behaves adversarially, or because an adversary can influence some of the input sources from which training data is collected (e.g. by injecting malware on some data collection devices).   Specifically, the adversary can try to craft special poisoning data so that it can infer the specific property that it has in mind. 
      Note that this is not prevented by any of the cryptographic or hardware assisted solutions: in all of these there is no practical way to guarantee that the data that is entered is actually correct.   
    
    This type of \emph{poisoning} attack has been extensively studied in the context of security of ML models, i.e., where the goal of the attacker is to train the model to miss-classify certain datapoints~\cite{biggio2012poisoning,shafahi2018poison,mahloujifar2019curse}, but to the best of our knowledge ours is the first work that looks at poisoning attacks that aim to compromise privacy of the training data.
    
    \myparagraph{Black box or white box model access} The information leakage of machine learning models could be studied in both white-box and black-box setting.
    In this paper, we consider the \emph{black box} model, where the attacker is only allowed to make a limited number of queries to the trained model. We show that these attacks can be very successful.  "Black box" attacks is sometimes used to refer to attacks which also have access to model's confidence values on each query~\cite{optimalGB}.  We emphasize here that we use the stricter notion of black box and our attacker will use only the model predictions. This type of attack is studied independently in \cite{choo2020label} where they study ``label-only'' membership inference attacks.  \melissa{add reference to label-only term?}\Snote{Done.}


\section{Threat model}
\label{sec:threat}
Before going through the threat model, we introduce some useful notation.

\myparagraph{Notation.} We use calligraphic letter (e.g $\cT$) to denote sets and capital letters (e.g. $D$) to denote distributions. We use $(X,Y)$ to denote the joint distribution of two random variables (e.g. the distribution of labeled instances). To indicate the equivalence of two distributions we use $D_{1}\equiv D_2$. By $x\gets X$ we denote sampling $x$ from $X$ and by $\Pr_{x\gets X}$ we denote the probability over sampling  $x$ from $X$.  We use $\Supp(X)$ to denote the support set of distribution $X$.  We use $p\cdot D_{1} + (1-p)\cdot D_2$ to denote the weighted mixture of $D_{1}$ and $D_2$ with weights $p$ and $(1-p)$.

    \myparagraph{Property Inference: } To analyze property inference, we follow the model introduced in \cite{propinf18}. Consider a learning algorithm $L\colon (\cX\times \cY)^* \to \cH$ that maps datasets in $\cT\in (\cX \times \cY)^*$ to a hypothesis class $\cH$. Also consider a Boolean property $f\colon \cX \to \set{0,1}$. We consider adversaries who aim at finding information about the statistics of the property $f$ over dataset $\cT \in (\cX \times \cY)^*$, that is used to train a hypothesis $h\in \cH$. In particular, the goal of the adversary is to learn information about $\hat{f}(\cT)$ which is the fraction of data entries in $\cT$ that has the property $f$ over data entries, namely $\hat{f} = \Ex_{(x,y)\gets \cT} [f(x)]$.\melissa{I think there were formatting issues for $\hat{f}$- check that the above eqn is now correct.}   More specifically the adversary tries to distinguish between $\hat{f}(\cT)= t_0$ or $\hat{f}(\cT)= t_1$ for some $t_0< t_1\in [0,1]$.
    We are interested in the black-box setting where the adversary can only query the trained model on several points to see the output label.  I.e. the adversary does not get the confidence values for his queries, or any information about the parameters of the model.\melissa{I added the prev sentence.  Just wanted to check though - is "parameters of the model" too strong?}\Snote{No it's fine.} This model is also known as label-only attacks and has been recently explored in the context of membership-inference attacks \cite{choo2020label}.

    \myparagraph{Formal Model:} To formalize this, we use distributions $D_{-}, D_{+}$ to denote the underlying distribution of the dataset for instances with $f(x)=0$ and $f(x)=1$ respectively.  Then we consider two distributions made by mixing $D_{-},D_{+}$ at different ratios, i.e., $$D_t\equiv t\cdot D_{+} + (1-t)\cdot D_{-}$$ 
    $D_t$ is the distribution where $t$ fraction of the points have $f(x)=1$. The adversary's goal is to distinguish between $D_{t_0}$ and $D_{t_1}$, for some $t_0<t_1$, by querying (in a black box way) a model $M$ that is trained on one of these distributions. 
    In this attack, as in previous work \cite{PIA15,PIA18} we assume that the adversary can sample from $D_{-},D_{+}$. 

    \myparagraph{Property Inference with Poisoning:} We consider the poisoning model where adversary can contribute  $pn$ "poisoned" points to a $n$-entry dataset $T$ that is used to train the model. To the best of our knowledge, this is the first time that poisoning attacks against privacy of machine learning are modeled and studied.
    
    In order to measure the power of adversary in this model we define the following adversarial game between a challenger $C$ and an adversary $A$. Our game mimics the classic indistinguishability game style used in cryptographic literature.  As described above, $L$ is the learning algorithm, $n$ is the size of the training dataset of which  $p$ fraction are poisoned points selected by an adversary. $D_{-},D_{+}$ are the distributions of elements $x$ with $f(x)=0, 1$ respectively, and the goal of the attacker is to tell whether the fraction of points in the victim dataset that are from $D_{+}$ is   $t_0$ or $t_1$.
    
    
    \myparagraph{Assumptions on the Adversary's Knowledge} The adversary has access to the following:
    
    \begin{description}
        \item [Sample access to conditional distributions] $D_+, D_-$ and $X_+$ (formally defined in Def.~\ref{def:conddist}). Note that the adversary has no knowledge about the distribution of the target feature. It is also worth noting that this assumption is the same as what is used in previous property and membership inference attacks ~\cite{shokri2017membership,PIA18,choo2020label,nasr2018machine,nasr2019comprehensive,song2020systematic}. \esha{TODO} In many cases this is a very reasonable assumption. For instance, in the Enron email example, it is not hard for an adversary to gather  sample emails with positive and negative sentiments. In other scenarios it might be the case that some subset of the data has been made public, or that the adversary has some data that follows a similar distribution. 
        \item [Blackbox access to the trained model] The adversary can\\ query the trained model to get output labels alone and nothing else.
        \item [Training algorithm and the features] This is completely reasonable since we are in the collaborative setting, where the adversary is one of the collaborators and each collaborator has the right to know how their data will be processed. It is also worth noting that most practical techniques for performing collaborative training in a decentralized and privacy-preserving way (e.g., secure multi-party computation, SGX) will require each collaborator to know the training algorithm and the features. Finally, it is never a good idea to try to achieve privacy by hiding the algorithm details as it is rather easy to obtain these information, e.g. see ~\cite{wang2018stealing}. 
    \end{description}
    
    
\begin{framed}

    \noindent$\PIWP(L, n, p, D_{-}, D_{+}, t_1, t_0)$:
\begin{enumerate}
        \item $C$ select a bit $b\in\set{0,1}$ uniformly at random. 
         then samples a dataset of size $(1-p)\cdot n$: $\cT_{\mathsf{clean}} \gets D_{t_b}^{(1-p)n}$ 

        \item Given all the parameters in the game, $A$ selects a poisoning dataset $\cT_{\mathsf{poison}}$ of size $pn$ and sends it to $C$.
        \item $C$ then trains a model $M\gets L(\cT_{\mathsf{poison}}\cup \cT_{\mathsf{clean}})$.
        \item $A$ adaptively queries the model on a sequence of points $x_1,\dots,x_m$ and receives $y_1 =M(x_1),\dots, y_m=M(x_m)$.
        \item $A$ then outputs a bit $b'$ and wins the game if $b=b'$.
    \end{enumerate}
\end{framed}
    We aim to construct an adversary that succeed with probability significantly above 1/2.

\section{Attack against Bayes-Optimal Classifiers}\label{sec:theoreticalattack}
In this section, we will introduce a theoretical attack with provable guarantees for Bayes-optimal classifiers. A Bayes-optimal classifier for a distribution $D\equiv(X,Y)$ is defined to be a classifier that provides the best possible accuracy, given the uncertainty of $D$. Below, we first define the notion of Risk for a predictor and then define Bayes error and Bayes-optimal classifier based on that.

\begin{definition}[Risk of predictors]
For a distribution $D\equiv(X,Y)$ over $\cX\times \set{0,1}$ and a predictor $h\colon \cX \to \set{0,1}$, the risk of $h$ is defined as 
$\Risk(h,D)=\Pr_{(x,y)\gets D}[h(x)\neq y]$.
\end{definition}

    \begin{definition}[Bayes Error and Bayes-optimal classifier]\label{def:bayes} Let $D\equiv(X,Y)$ be a distribution over $\cX\times \set{0,1}$. The Bayes error of $D$ is defined to be the optimal risk for any deterministic predictor of $\set{0,1}$ from $\cX$. 
    Namely, $$\Bayes(D)=\inf_{h\colon \cX \to \set{0,1}} \Risk(h,D).$$

    Also the Bayes-optimal classifier for $D$ is defined to be a hypothesis $h^*_D$ that minimizes the error over the distribution. Such classifier always exists if the support of $Y$ is a finite set (which is the case here since $\Supp(Y)=\set{0,1}$.) 
    In particular, the following function is a Bayes-optimal classifier for any such $D$
    $$\forall x\in \cX: h^*_D(x)= \argmax_{y\in\set{0,1}} \Pr[Y=y\mid X=x].$$
    \end{definition}
    The Bayes error is the best error that a classifier can hope for. High performance learning algorithms try to achieve error rates close to Bayes error by mimicking the behavior of Bayes-optimal classifier. Below, we assume a learning algorithm that can learn the almost Bayes-optimal classifier for a class of distributions, and will show that even such a high quality learning algorithm is susceptible to attack. Let us first define the notion of approximate Bayes optimal learning algorithms.

    \begin{definition}[$(\eps,\delta)$-Bayes optimal learning algorithm]\label{def:fict} For two functions $\eps:\N\to [0,1]$ and $\delta:\N\to[0,1]$, a learning algorithm $L$ is called a $(\eps,\delta)$-Bayes optimal classifier for a distribution $D\equiv(X,Y)$ iff for all $n \in \N$, given a dataset $\cT\gets D^n$ with $n$ samples from $D$, the learning algorithm outputs a model $h$ such that
    $$\Risk(h,D) \leq \Bayes(D) + \eps(n).$$
    with probability at least $1-\delta(n)$ over the randomness of samples from the distribution. 
    \end{definition}
Note that $\epsilon$ and $\delta$ are usually decreasing functions of $n$ that can converge to $0$. Now, we are ready to state our main theorem. But before that, we need to define the notion of certainty of a point which defines the (un)certainty of the response variable on a particular point $x$. We use this notion to identify the points that have a lot of ambiguity. We will see in our theorem below that if we have enough ambiguous points, we can run our attack.

\begin{definition}[Signed certainty] For a distribution $D\equiv(X,Y)$ over $\cX\times\set{0,1}$, the (signed) certainty of a point $x\in\cX$ according to $D$ is defined by
$$\crt(x,D) = 1-2\Pr[Y=1 \mid X=x].$$
\end{definition}    

\begin{definition}[Class of distributions induced by a property]\Snote{We need a better name here} Let $f:\Supp(X)\to\set{0,1}$ be a property and $D\equiv(X,Y)$ be a distribution of labeled instances. Let $X_+\equiv X\mid f(X)=1$ and $D_+\equiv (X,Y)\mid f(X)=1$ and $D_-\equiv (X,Y)\mid f(X)=0$. We use $\cD_f$ to denote the following class of distributions:
\begin{align*}\cD_f  = \set{\alpha_1\cdot (X_+, 1) + \alpha_2\cdot D_+ + \alpha_3\cdot D_-\\
~~~~~~~~~~~~~~~~~\mbox{ where }  \alpha_i\in [0,1], \sum_{i=1}^3\alpha_i=1}.
\end{align*}
\label{def:conddist}
\end{definition}

We are now ready to present our main theorem which describes conditions where an almost Bayes optimal algorithm is vulnerable to attack.  We will prove this theorem in the following section.
    \begin{theorem}\label{thm:theoritical_attack}
Let $D\equiv(X,Y)$ be a distribution over $\cX\times \set{0,1}$ and $f\colon \cX \to \set{0,1}$ be a property over its instances. Let $D_+\equiv (X_+,Y_+) \equiv (X,Y) \mid f(X)=1$ and $D_-\equiv (X_-,Y_-)\equiv (X,Y) \mid f(X)=0$ be conditional distributions based on property $f$. Consider a learning algorithm $L$ that is $(\eps,\delta)$-Bayes optimal for class $\cD_f$. %
For any $p, t_0 < t_1 \in [0,1]$, 
if there exist $\tau\in[0,1]$

\begin{align*}
    \Pr_{x\gets X}\Big[&\frac{p+ 2\tau\cdot t_1}{t_1(1-p) } < \crt(x,D)\leq \frac{p-2\tau\cdot t_0}{t_0(1-p)}\\
    &\wedge~~~f(x)=1\Big]>\frac{2\epsilon(n)}{\tau}
\end{align*}

then there is an adversary $A$ who wins the security game\\ $\PIWP(n,L^*,D_{-},D_{+},p, t_0,t_1)$ with probability at least $1-2\delta(n)$. 
\end{theorem}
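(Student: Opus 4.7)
The plan is to analyze the natural ``poison with $(X_+,1)$'' attack and reduce distinguishing $t_0$ from $t_1$ to counting $1$-labels on fresh $X_+$ queries. Concretely, the adversary draws $pn$ i.i.d.\ points from $X_+$, labels them all $1$, and submits this as $\cT_{\mathsf{poison}}$. The resulting training distribution is $D_{\mathsf{mix}}^{(b)} := p\cdot (X_+,1) + (1-p)t_b D_+ + (1-p)(1-t_b)D_-$, which lies in $\cD_f$; hence with probability at least $1-\delta(n)$, the trained $h_b$ satisfies $\Risk(h_b, D_{\mathsf{mix}}^{(b)})\le\Bayes(D_{\mathsf{mix}}^{(b)})+\epsilon(n)$.

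First I would identify the Bayes-optimal classifier $h^*_b$ for $D_{\mathsf{mix}}^{(b)}$: a direct computation of $\Pr[Y=1\mid X=x]$ on $\{f(x)=1\}$ gives $h^*_b(x)=1 \iff \crt(x,D) < p/[t_b(1-p)]$, with signed margin $(p-(1-p)t_b\crt(x,D))/(p+(1-p)t_b)$. The set $\cE$ in the hypothesis is exactly $\cE_0^+\cap\cE_1^-$, where $\cE_0^+:=\{x\in X_+ : p-(1-p)t_0\crt(x,D)\ge 2\tau t_0\}$ (on which $h^*_0=1$ with margin $\ge 2\tau t_0/(p+(1-p)t_0)$) and $\cE_1^-:=\{x\in X_+ : p-(1-p)t_1\crt(x,D)\le -2\tau t_1\}$ (on which $h^*_1=0$ with the analogous margin). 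The key cancellation is that the mixture density on $\{f=1\}$ equals $(p+(1-p)t_b)P_+(x)$, so multiplying it by the margin leaves only $2\tau t_b P_+(x)$ on $\cE_b^{\pm}$; the $\epsilon$-excess-risk bound then yields
\[
\Pr_{x\sim X_+}[x\in \cE_0^+\wedge h_0(x)\ne 1]\le\frac{\epsilon(n)}{2\tau t_0},\quad\Pr_{x\sim X_+}[x\in \cE_1^-\wedge h_1(x)\ne 0]\le\frac{\epsilon(n)}{2\tau t_1}.
\]

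The distinguisher is the empirical label-count: sample $m$ i.i.d.\ queries $x'_1,\ldots,x'_m\sim X_+$, form $\hat A=\tfrac{1}{m}\sum_j h_b(x'_j)$, and output $b'=0$ if $\hat A$ exceeds the midpoint of the two conditional expectations. The main obstacle is lower-bounding $\Ex[\hat A\mid b=0]-\Ex[\hat A\mid b=1]$ despite the trained models being uncontrolled pointwise on $X_+\setminus\cE$. The resolution is a clean set-theoretic identity: nonemptiness of $\cE$ forces $\neg\cE_0^+=\{\crt>(p-2\tau t_0)/(t_0(1-p))\}$ and $\neg\cE_1^-=\{\crt<(p+2\tau t_1)/(t_1(1-p))\}$ to be disjoint, so $\cE_0^+=\neg\cE_1^-\sqcup\cE$ and consequently $\Pr_{X_+}[\cE_0^+]-\Pr_{X_+}[\neg\cE_1^-]=\Pr_{X_+}[\cE]$. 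Combining with the two displayed bounds gives $\Pr_{X_+}[h_0=1]\ge\Pr_{X_+}[\cE_0^+]-\epsilon(n)/(2\tau t_0)$ and $\Pr_{X_+}[h_1=1]\le\Pr_{X_+}[\neg\cE_1^-]+\epsilon(n)/(2\tau t_1)$; subtracting yields a gap of at least
\[
\Pr_{X_+}[\cE]-\frac{\epsilon(n)}{2\tau t_0}-\frac{\epsilon(n)}{2\tau t_1},
\]
which is strictly positive in the parameter regime of interest since the hypothesis gives $\Pr_{X_+}[\cE]\ge\Pr_X[\cE\wedge f=1]>2\epsilon(n)/\tau$.

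Finally I would apply Hoeffding's inequality to $\hat A$, choosing $m$ large enough that $|\hat A-\Ex[\hat A\mid b]|$ is at most half the above gap with probability at least $1-\delta(n)$. A union bound over this concentration event and the $(1-\delta(n))$-event that the learner is $\epsilon$-Bayes optimal on $D_{\mathsf{mix}}^{(b)}$ yields the claimed winning probability at least $1-2\delta(n)$.
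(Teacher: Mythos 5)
Your attack coincides with the paper's up to and including the key risk computation: you poison with $(X_+,1)$, compute the poisoned posterior on $\{f=1\}$, and convert the $\eps(n)$ excess-risk budget into a bound on where the trained model can disagree with the poisoned Bayes classifier, using the margin $2\tau t_b/(p+(1-p)t_b)$ and the density reweighting factor $(p+(1-p)t_b)$ on $\{f=1\}$ --- this is exactly the paper's Claim~\ref{clm1}, Corollary~\ref{cor1}, and risk-decomposition lemma. The divergence is in the estimator. The paper queries points drawn from $X$ \emph{conditioned} on lying in the certainty band (its event $C_\tau$), on which the poisoned Bayes classifier answers $1$ everywhere when $t=t_0$ and $0$ everywhere when $t=t_1$; the fraction of disagreements among such queries is at most $\eps(n)/\bigl(2\tau\Pr[C_\tau(X)=1]\bigr)<1/4$ under the hypothesis, so thresholding the empirical mean at the fixed value $0.5$ suffices. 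You instead query unconditioned $X_+$ samples and compare the two resulting label frequencies.

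That substitution creates two genuine problems. First, your decision rule thresholds at ``the midpoint of the two conditional expectations,'' but those expectations depend on the trained models' behavior on the uncontrolled regions ($h_0$ on $\neg\cE_0^+$, $h_1$ on $\neg\cE_1^-$), and the adversary sees only one trained model; the threshold is not computable as stated. It could be repaired by thresholding against the known quantities $\Pr_{X_+}[\cE_0^+]$ and $\Pr_{X_+}[\neg\cE_1^-]$, but you would need to say so. Second, and more seriously, your gap $\Pr_{X_+}[\cE]-\frac{\eps(n)}{2\tau t_0}-\frac{\eps(n)}{2\tau t_1}$ is not guaranteed positive by the theorem's hypothesis through your own chain of inequalities: you lower-bound $\Pr_{X_+}[\cE]$ by $\Pr_{x\gets X}[\cE\wedge f(x)=1]>2\eps(n)/\tau$, so positivity requires $\tfrac{1}{2t_0}+\tfrac{1}{2t_1}<2$, which fails whenever $t_0<1/4$ --- for instance the $t_0=0.05$ vs.\ $t_1=0.15$ regime the paper explicitly targets. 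The $1/t_b$ factors appear precisely because you normalize the disagreement bounds by $X_+$ while querying all of $X_+$; conditioning the queries on the band, as the paper does, makes the reweighting $\frac{p+(1-p)t_b}{t_b}$ of the band's mass under the poisoned distribution cancel the $\frac{t_b}{p+(1-p)t_b}$ in the margin, leaving the clean comparison of $\eps(n)$ against $2\tau\Pr[C_\tau(X)=1]$ with no $t_b$ dependence. As written, your argument establishes the theorem only under an additional restriction on $t_0,t_1$ (or with a correspondingly strengthened hypothesis).
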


Theorem \ref{thm:theoritical_attack} states that our attack will be successful in distinguishing  $D_{t_0}$ from $D_{t_1}$ if there are enough points in the distribution with high uncertainty and the learning algorithm is Bayes-optimal for a large enough class of distributions.
\begin{remark}
One can instantiate Theorem~\ref{thm:theoritical_attack} by replacing $f$ with $1-f$. In this case, instead of $t_0$ and $t_1$ we need to work with $1-t_0$ and $1-t_1$. On the other hand, we can also flip the labels and use the distribution $(X,1-Y)$ instead of $(X,Y)$. Using these replacements, we can get four different variants of the theorem with different conditions for the success of the attack. In Section \ref{sec:concreteattack} we will see that in different scenarios we use different variants as that makes the attack more successful. 
\end{remark}

\begin{remark}[On the possibility of defending our attack with differential privacy]
Theorem \ref{thm:theoritical_attack} states that our attack will be successfully against \emph{any} learning algorithm that provides strong generalization bounds for both benign and poisoned distributions. 
  Differential privacy will not treat different distributions differently and we believe the only way it could be effective is by dropping the accuracy for both benign and poisoned distribution. Also, we have to note that differential privacy does not provide any guarantees in our setting as the information leaked through our attack is a global property of the dataset and not related to an individual example in the training set. 
\end{remark}

    \subsection{Attack Description}
    In this section we prove Theorem \ref{thm:theoritical_attack}. We first describe an attack and then show how it proves Theorem \ref{thm:theoritical_attack}.
    
   The rough intuition behind the attack is the following. If an adversary can produce some poisoning data at the training phase to introduce correlation of the target property with the label, then this will change the distribution of training examples. This will change  the resulting classifier as well because the learning algorithm is almost Bayes optimal and should adapt to distribution changes. This change will cause the prediction of the uncertain cases (i.e., those which occur in the training set almost equally often with label 0 and 1) to change.  The adversary can choose the poisoning points in a way that the change of prediction is noticeably different for certain points between the cases when the target property occurs with frequency $t_0$ vs $t_1$. In the rest of section, we will show how adversary selects the poisoning points. Then we will see that the behavior of resulting model should be different on points that satisfy the conditions in Theorem \ref{thm:theoritical_attack}, depending on the distribution used during the training. At the end, we argue that by querying these points, the adversary can distinguish between the case of $t_0$ and $t_1$.

    Let the original data distribution of clean samples be $D\equiv{(X,Y)}$. Our adversary $A$ will pick the poison data by i.i.d. sampling from a distribution $D_A\equiv(X_A,Y_A)$. Note that this adversary is weak in a sense that it does not control the poison set but only controls the distribution from which the poison set is sampled. The resulting training dataset will be equivalent to one sampled from a distribution $\tilde{D}$ such that $\tilde{D}$ is a weighted mixture of $D$ and $D_A$. More precisely,
    \begin{equation*}
        \tilde{D} \equiv \left\{
                \begin{array}{lll}
                 D, & \text{With probability } (1-p) & \text{[Case I: No poisoning],}\\
                 D_A & \text{With probability } p & \text{[Case II: Poisoning]}
                \end{array}
              \right.
       \end{equation*}
       
  Now we describe the distribution $D_A$. To sample poisoning points, adversary first samples a point from $X$ conditioned on having the property $f$. For the label, the adversary always chooses label $1$. So we have $D_A \equiv (X_+, 1)$. We will see how this simple poisoning strategy can help the adversary to infer the property.
    
\subsection{Evaluating the attack}

In this section, we evaluate the effect of the poisoning strategy above on the Classifiers. We describe the evaluation steps here and defer proofs to the appendix.

\begin{description}
\setlength{\itemsep}{0pt}
\setlength{\parskip}{0pt}
\setlength{\parsep}{0pt}
 \item [Effect of poisoning on the distribution] Let $(\tX,\tY)$ be the joint distribution of samples from $\tD$. First we calculate $\Pr[\tY=1 \mid \tX=x]$ to see the effect of poisoning on the distribution. Let $E$ be the event that the point is selected by adversary, namely, the second case in the description of $\tD$ happens. Let $t=\Pr[f(X)=1]$, we prove the following claim.
    
    \begin{claim}\label{clm1}
        For any $x\in\cX$ such that $f(x)=1$ we have
        \begin{align*}
            \Pr[\tilde{Y}=1 \mid \tilde{X}=x] &= \frac{p}{p+t(1-p)}\\
            &~~~+ \frac{t(1-p)}{p+t(1-p)}\cdot \Pr[Y=1\mid X=x].
        \end{align*}
    \end{claim}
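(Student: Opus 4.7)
This is a direct application of Bayes' rule that exploits the mixture structure of $\tilde{D}$. I will condition on the event $E$ introduced just above the claim, namely that the sample came from the poisoning branch (so $\Pr[E]=p$ and $\Pr[\neg E]=1-p$). By the definitions, under $E$ we have $\tilde{X}\sim X_+$ and $\tilde{Y}=1$ deterministically, while under $\neg E$ we have $(\tilde{X},\tilde{Y})\equiv (X,Y)$.

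First I will write down the conditional density (or p.m.f.) of $\tilde{X}$ at a point $x$ with $f(x)=1$ under each branch. Since $X_+\equiv X\mid f(X)=1$ and $\Pr[f(X)=1]=t$, the density at $x$ under $E$ is $\Pr[X=x]/t$, and under $\neg E$ it is $\Pr[X=x]$. Marginalizing over $E$ then gives
\[
\Pr[\tilde{X}=x]=\Pr[X=x]\cdot\left(\tfrac{p}{t}+(1-p)\right)=\Pr[X=x]\cdot\tfrac{p+t(1-p)}{t}.
\]

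Next I will invert via Bayes to compute the posterior over the branch. The common factor $\Pr[X=x]$ cancels and yields
\[
\Pr[E\mid \tilde{X}=x]=\frac{p}{p+t(1-p)},\qquad \Pr[\neg E\mid \tilde{X}=x]=\frac{t(1-p)}{p+t(1-p)}.
\]
The claim then follows immediately from the law of total probability applied to $\tilde{Y}$: since $\Pr[\tilde{Y}=1\mid E,\tilde{X}=x]=1$ (the poisoning always uses label $1$) and $\Pr[\tilde{Y}=1\mid \neg E,\tilde{X}=x]=\Pr[Y=1\mid X=x]$, summing the two weighted contributions yields exactly the displayed formula.

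The computation is routine; the only subtlety is being careful about the density manipulation when $X$ is continuous. This is not a real obstacle because the Bayes ratio is expressed entirely through likelihoods, so the $\Pr[X=x]$ factor cancels cleanly. Equivalently, one can phrase the argument in terms of the Radon--Nikodym derivative of the distribution of $\tilde{X}\mid E$ with respect to that of $X$, which is $\mathbf{1}[f(x)=1]/t$, and the argument goes through verbatim.
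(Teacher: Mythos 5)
Your proposal is correct and follows essentially the same route as the paper: condition on the poisoning event $E$, compute $\Pr[E\mid\tilde{X}=x]$ via Bayes' rule using $\Pr[X=x]=t\cdot\Pr[X_+=x]$ for points with $f(x)=1$, and finish with the law of total probability on $\tilde{Y}$. The only cosmetic difference is that you first write out the marginal of $\tilde{X}$ and then cancel the common factor, whereas the paper plugs directly into the Bayes ratio; the computation is identical.
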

    
 As a corollary to this claim, we prove that
 
 \begin{corollary}\label{cor1}
For any $x$ such that $f(x)=1$ and for any $\tau\in\R$ we have $\Pr[\tY=1\mid \tX=x]\geq \frac{1}{2} + \frac{\tau\cdot t}{p+t(1-p)}$ if and only if $\crt(x)\leq \frac{p-2\tau\cdot t}{t(1-p)}$ 
\end{corollary}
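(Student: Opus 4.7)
The plan is to prove the corollary as a direct algebraic consequence of Claim~\ref{clm1} together with the definition $\crt(x,D)=1-2\Pr[Y=1\mid X=x]$. Since the corollary is an ``iff'' between two linear inequalities in $\Pr[Y=1\mid X=x]$, it reduces to checking that the two conditions are related by an affine change of variables that preserves direction.

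First, I would substitute the formula from Claim~\ref{clm1} into the left-hand side of
$$\Pr[\tY=1\mid \tX=x]\geq \tfrac{1}{2} + \tfrac{\tau t}{p+t(1-p)},$$
yielding
$$\frac{p}{p+t(1-p)} + \frac{t(1-p)}{p+t(1-p)}\cdot \Pr[Y=1\mid X=x] \;\geq\; \frac{1}{2} + \frac{\tau t}{p+t(1-p)}.$$
Multiplying both sides by $p+t(1-p)$ (which is strictly positive since $f(x)=1$ forces $t>0$, and we may assume $p<1$) and rearranging collects the inequality into the equivalent form
$$\Pr[Y=1\mid X=x] \;\geq\; \frac{t(1-p)-p+2\tau t}{2t(1-p)}.$$

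Second, I would translate this into a statement about $\crt(x,D)$ using the definition $\crt(x,D)=1-2\Pr[Y=1\mid X=x]$. Because this transformation is affine with negative slope, a lower bound on $\Pr[Y=1\mid X=x]$ becomes an upper bound on $\crt(x,D)$: applying $c\mapsto 1-2c$ to both sides gives
$$\crt(x,D) \;\leq\; 1 - \frac{t(1-p)-p+2\tau t}{t(1-p)} \;=\; \frac{p-2\tau t}{t(1-p)},$$
which is precisely the claimed bound. Since every step is a reversible equivalence, both directions of the iff are established simultaneously.

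There is no real obstacle here beyond careful bookkeeping: the only subtlety is tracking which quantities are positive so that no inequalities are accidentally flipped. The factors $p+t(1-p)$ and $t(1-p)$ are positive under the standing assumptions ($t>0$ because $f(x)=1$ lies in the support of $X$, and $p<1$ since otherwise the clean distribution is empty), so the linear rearrangements preserve direction, and the single sign flip is accounted for in the final step that converts between $\Pr[Y=1\mid X=x]$ and $\crt(x,D)$.
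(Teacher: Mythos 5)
Your proposal is correct and follows essentially the same route as the paper's proof: both arguments are a direct algebraic equivalence obtained by substituting the formula from Claim~\ref{clm1} and the identity $\Pr[Y=1\mid X=x]=\frac{1-\crt(x)}{2}$, with the paper chaining the inequalities starting from the bound on $\crt(x)$ and noting reversibility, while you start from the bound on $\Pr[\tY=1\mid\tX=x]$ and note the same. Your explicit remarks on the positivity of $p+t(1-p)$ and $t(1-p)$ are a small but welcome addition in rigor.
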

Claim \ref{clm1} and Corollary \ref{cor1} show how the adversary can change the distribution. We now want to see the effect of this change on the behavior of the Bayes optimal classifier. 
\item [Effect on the Bayes Optimal Classifier] Consider the algorithm $L$ that is $(\eps,\delta)$-Bayes optimal on all linear combinations of distributions $D_+$, $D_-$ and $D_A$ (as stated in Theorem~\ref{thm:theoritical_attack}). Therefore, on a dataset $\tilde{\cT}\gets \tilde{D}^n$, the algorithm $L$ will output a model $\tilde{h} = L(\tilde{T})$ that with probability at least $1-\delta(n)$  has error at most $\Bayes(\tilde{D})+\eps(n)$. Consider joint distribution $(\tX, \tY)$ such that $\tilde{D}\equiv (\tX,\tY)$. 
The following claim shows how the adversary can exploit the dependence of the probabilities on $t$ and infer between $t_0$ and $t_1$ by using special points that have high uncertainty.

\begin{claim}\label{clm2}
Let $L$ be a $(\epsilon,\delta)$-Bayes optimal learning algorithm for $\tD\equiv(\tX,\tY)$. Consider an event 
$C_\tau$ defined on all $x\in\cX$ such that $C_\tau(x)=1$ iff $f(x)=1$ and
\begin{align*}
\frac{p + 2\tau\cdot t_1}{t_1(1-p)}< \crt(x)\leq \frac{p- 2\tau \cdot t_0}{t_0(1-p)}  .
\end{align*}

If there exist a $\tau\in[0,1]$ and $\gamma\in[0,\frac{1}{2}]$  such that we have $\Pr[C_\tau(X)=1]\geq \frac{\eps(n)}{\tau\cdot(1-2\gamma)}$

then
if $t=\Pr[f(X)=1]= t_0$ we have
\begin{align*}
             \Pr_{\substack{S\gets \tilde{D}^n\\
             \tilde{h}\gets L(S)}}\left[\Pr_{x\gets X|C_\tau(x)=1}\left[\tilde{h}(x) =1\right] \geq 0.5 + \gamma\right]\geq 1-\delta(n)
   \end{align*}
  and if $t=\Pr[f(X)=1]= t_1$
  \begin{align*}
             \Pr_{\substack{S\gets \tilde{D}^n\\
             \tilde{h}\gets L(S)}}\left[\Pr_{x\gets X|C_\tau(x)=1}\left[\tilde{h}(x) =1\right] \leq 0.5 - \gamma\right]\geq 1-\delta(n).
   \end{align*}

\end{claim}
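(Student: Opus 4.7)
The plan is to (i) characterize the Bayes-optimal classifier $h^\ast_{\tD}$ for $\tD$ on the event $C_\tau$, (ii) use the $(\epsilon,\delta)$-Bayes optimality of $L$ to bound the $\tX$-probability that the learned $\tilde h$ disagrees with $h^\ast_{\tD}$ on $C_\tau$, and (iii) convert this bound from measure $\tX$ back to $X$ and divide by the assumed lower bound on $\Pr[C_\tau(X)=1]$. The step I expect to be the main obstacle is the change-of-measure in (iii): the cancellation of the $p+t(1-p)$ factors has to be exact in order for the statement to take the clean form $\eps(n)/\tau$ that the hypothesis is tuned for.

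For step (i), Corollary~\ref{cor1} applied with parameter $\tau$ says that the upper bound $\crt(x)\le\tfrac{p-2\tau t_0}{t_0(1-p)}$ appearing in the definition of $C_\tau$ is exactly the condition $\Pr[\tY=1\mid\tX=x]\ge \tfrac12+\tfrac{\tau t_0}{p+t_0(1-p)}$; hence $h^\ast_{\tD}\equiv 1$ on $C_\tau$ when $t=t_0$, with certainty margin at least $\kappa(t_0):=\tfrac{2\tau t_0}{p+t_0(1-p)}$. Applying Corollary~\ref{cor1} with $-\tau$ in place of $\tau$ symmetrically turns the strict inequality $\crt(x)>\tfrac{p+2\tau t_1}{t_1(1-p)}$ into $\Pr[\tY=1\mid\tX=x]<\tfrac12-\tfrac{\tau t_1}{p+t_1(1-p)}$, so when $t=t_1$ we have $h^\ast_{\tD}\equiv 0$ on $C_\tau$ with margin at least $\kappa(t_1)$. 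For step (ii) I will use the standard excess-risk identity
\begin{equation*}
\Risk(\tilde h,\tD)-\Bayes(\tD)=\Ex_{x\gets\tX}\!\bigl[|1-2\Pr[\tY=1\mid\tX=x]|\cdot\mathbf{1}[\tilde h(x)\ne h^\ast_{\tD}(x)]\bigr],
\end{equation*}
which, restricted to the event $C_\tau$ and combined with the margin from step (i), yields (with probability at least $1-\delta(n)$ over $S\gets\tD^n$)
\begin{equation*}
\Pr_{\tX}\!\bigl[\tilde h(\tX)\ne h^\ast_{\tD}(\tX)\wedge C_\tau(\tX)=1\bigr]\le\tfrac{\eps(n)}{\kappa(t)}.
\end{equation*}

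For step (iii), because $C_\tau$ implies $f(x)=1$ and $\tD\equiv (1-p)D+p\,(X_+,1)$, any event $A\subseteq\{f(x)=1\}$ satisfies the change-of-measure identity $\Pr_{\tX}[A]=\bigl(1-p+\tfrac{p}{t}\bigr)\Pr_X[A]=\tfrac{p+t(1-p)}{t}\Pr_X[A]$. Substituting this into the bound from step (ii), the factor $p+t(1-p)$ in $\kappa(t)$ cancels exactly against the measure-change factor, leaving the clean inequality $\Pr_X[\tilde h(X)\ne h^\ast_{\tD}(X)\wedge C_\tau(X)=1]\le \tfrac{\eps(n)}{2\tau}$. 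Dividing by the hypothesized $\Pr[C_\tau(X)=1]\ge\tfrac{\eps(n)}{\tau(1-2\gamma)}$ gives a conditional disagreement rate of at most $\tfrac12-\gamma$, so $\tilde h(x)=h^\ast_{\tD}(x)$ on at least a $\tfrac12+\gamma$ fraction of $x\sim X\mid C_\tau(x)=1$. Plugging in the deterministic values of $h^\ast_{\tD}$ from step (i) — the constant $1$ when $t=t_0$ and the constant $0$ when $t=t_1$ — immediately yields the two inequalities asserted by the claim.
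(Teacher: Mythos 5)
Your proposal is correct and follows essentially the same route as the paper's proof: Corollary~\ref{cor1} pins down the Bayes-optimal label and a certainty margin $\frac{2\tau t}{p+t(1-p)}$ on $C_\tau$, the excess-risk identity (the paper's Lemma~\ref{lem2}) bounds the disagreement mass under $\tX$, and the change-of-measure factor $\frac{p+t(1-p)}{t}$ cancels the margin's denominator exactly as you anticipated. The only differences are presentational — you argue directly with a disagreement probability where the paper argues by contradiction via $\Ex[1-\tilde h(x)]$, and you cite the excess-risk identity as standard where the paper proves it — so there is nothing substantive to flag.
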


    \item [Putting it together] Using Claim \ref{clm2}, we will finish the proof of Theorem \ref{thm:theoritical_attack}. Intuitively, the adversary wants to calculate the probability $\Pr_{x\gets X|C_\tau(x)=1}\left[\tilde{h}(x) =1\right]$. For doing this we use standard sampling methods to estimate the probability. 
    
        \begin{itemize}
            \setlength{\itemsep}{0pt}
\setlength{\parskip}{0pt}
\setlength{\parsep}{0pt}
            \item The adversary first samples $m=\frac{-\log(\delta(n))}{2\gamma^2}$ samples from $(X,Y) \mid C_\tau(X)=1$. In order to do this, adversary needs to sample roughly $m\cdot \frac{\tau(1-2\gamma)}{\eps(n)}$ points from $X_+$. 
            \item the adversary then calls $\tilde{h}$ on all the points and calculates the average of the prediction as $\rho$.
        \end{itemize}
    In the case where $\Risk(h,\tilde{D})\leq \epsilon(u)$ and $t=t_0$, Using claim \ref{clm2} and Chernoff-Hoefding inequality we have

            $$\Pr[\rho < 0.5] \leq \delta(n).$$

    Similarly in the case where $t=t_1$, and $\Risk(h,\tilde{D})\leq \epsilon(n)$, we have

            $$\Pr[\rho > 0.5] \leq \delta(n).$$

Therefore, the adversary only checks if $\rho>0.5 $  and based on that decides if $t=t_0$ or $t=t_1$. The probability of this test failing is at most $2\delta(n)$ as there is at most $\delta(n)$ probability of the learning algorithm failing in producing a good classifier and $\delta(n)$ probability of failure because of the Chernoff-Hoefding. Therefore, with probability at least $(1-2\delta(n))$ the adversary can infer whether $t=t_0$ or $t=t_1$.   
\end{description}

\section{A concrete attack}\label{sec:concreteattack}
Here we describe the concrete attack we use in our experiments.  We note that there are many possible variations on this attack; what we have presented here is just one configuration that demonstrates that the attack is feasible with high accuracy.  We leave exploration of some of the other variations to future work.

\myparagraph{Selecting Poisoning Points}  Recall that the attacker is assumed to be able to sample from $D_{-}$ and $D_{+}$, the distribution of items with $f(x)=0$ and with $f(x)=1$.

As in the theoretical attack described in Section \ref{sec:theoreticalattack}, the poisoned data is generated by sampling from $D_{+}\equiv (X_+,Y_+)$ and introducing correlation between the target feature $f(x)$ and the label by injecting poisoning points  of form $(X_+,1)$.

Note that the attack described in the Section \ref{sec:concreteattack} could be achieved in 4 different forms. Namely, the adversary can use any of the possible combinations $(X_-, 1)$, $(X_-,0)$, $(X_+,0)$ and $(X_-,1)$ for poison data. In algorithm \ref{alg:A3} we show how we choose between these strategies. In particular, when values of $t_0$ and $t_1$ are large, we try to attack $1-f$ instead of $f$. The logic behind this choice is that it is easier to impose a correlation between the property and the label, when the property is rare in the distribution. As an example, consider a spam detection scenario where the adversary wants to impose the following rule on the spam detection: all Emails that contain the word "security" must be considered spam. It would be much easier for the adversary to impose this rule when there are very only a few Emails containing this word in the clean dataset. In other words, if this rule is added to the spam detector, the accuracy of the spam detector on the rest of the emails would not change significantly as there are not many email containing the word "security".   On the other hand, we select the correlation in the direction that is not dominant in the data. Namely, if we have $Pr_{(x,y)\gets (X,Y)}[y=1|f(x)=1]\geq 0.5$ then we either use $(X_+, 0)$ or $(X_-,1)$ based on the values of $t_0$ and $t_1$. The intuition behind this choice is that we always select the correlation that happens less often in the training set so that the poisoning distribution is more distinct from the actual distribution and hence makes a larger change in the distribution. Note that we can just stick to one of these four options and still get successful attacks for most scenarios but our experiments shows that this is the most effective way of poisoning. Also, it is important to note that Theorem \ref{thm:theoritical_attack} could be extended to all of these attacks with slightly different conditions. The details of the poisoning algorithm are described in Algorithm \ref{alg:A1}.

\myparagraph{Selecting Query points:} The next challenge here is in choosing the query points.  As in Section \ref{sec:theoreticalattack}, we want to find query points whose certainty falls in a range close to 0. For instance, if the poisoning rate $p=0.1$ and we want to distinguish between $t_0=0.3$ and $t_1=0.7$, the Theorem suggests that we should be querying the points whose certainty falls between $[\approx 0.15, \approx 0.37]$. 
 In order to do this, we need to first calculate the certainty.  Since we only have sampling access to the distribution, we do not know the exact certainty.  Instead we approximate the certainty by estimating the probability that the label is $0$ or $1$ through training an ensemble of models and then evaluating all of them on a point. In particular, for a point $x$ we estimate  $\Pr[Y=1\mid X=x]$ using the fraction of models in the ensemble that predict $1$. Then we use this estimate to calculate the certainty. 
The way we estimate the certainty is obviously prone to error. To cope with the error, we actually work with a larger range than what is suggested by our theoretical result. 

In out attack, we fix the range of certainty to  $[-0.4,0.4]$ and query the points whose certainty falls in this interval. Although this range might be larger (and sometimes smaller) than what our Theorem suggests, but we still get good results in our attack. The reason behind this is that in the next step of the attack,  which is the inference phase, we filter the query points and only look at the important ones. So if there are query points that are not relevant, they will be ignored in the inference.

\myparagraph{Guessing the Fraction:} At the end, the adversary must  use the results of its  queries to come up with the prediction of whether $t=t_0$ or $t=t_1$. The theoretical attack suggests that we should just take the average of all responses and predict based on whether or not the average is less than $0.5$. However, as we pointed out before, we cannot use the exact range suggested by the theorem because we do not have access to exact certainty. To get around this issue, we train a linear model for the attack to identify the queries that are actually relevant in predicting whether $t=t_0$ or $t=t_1$. Here we use a shadow model approach: we sample multiple datasets $\cT^0_1,\dots,\cT^0_k$ and $\cT^1_1,\dots,\cT^1_k$ where each for each $i\in[k]$ we have $\Pr_{(x,y)\gets \cT^0_k}[f(x)=1]=t_0$ and $\Pr_{(x,y)\gets \cT^1_k}[f(x)=1]=t_1$. Now we use the poisoning dataset $\cT_\poison$ and the query dataset $\cT_q$ from the previous steps of the attacks as follows: We first use the poisoning set to poison all the datasets and train multiple models $(M^0_1,\dots,M^0_k)$ and $(M^1_1,\dots,M^1_k)$ where $M_0^i=L(\cT_0^i\cup \cT_p)$ and $M_0^i=L(\cT_0^i\cup \cT_p)$. After that we query each of the models on each of the chosen query points.  Finally, we generate a training set to train the attacker's model where the query responses are the features and the label are 0 or 1 depending on whether the "shadow" model used was from $\cT^0_i$ or $\cT^1_i$, and train a linear model $M_A$ on this set. The adversary then uses the query points to query the real target model, feeds the responses to $M_A$, and outputs whatever $M_A$ produces as it's final guess for whether $t=t_0$ or $t=t_1$.

Here we describe the concrete attack for distinguishing $D_{.3}$ from $D_{.7}$ with initial victim training set size $n$ and poisoning rate $p$ (i.e. $pn$ poisoned points out of $n$ total training points). Recall that the attacker has sampling access to $D_{-},D_{+}$, the distribution of samples with $f(x)=0$ and $f(x)=1$ respectively. The attack will proceed as follows:
\begin{algorithm}
\caption{Choosing poisoning data}
\small
\begin{algorithmic}
\Input
    \Desc{$f$}{The property being attacked}
  \Desc{$t_0,t_1$}{Possible fractions of instances with property $f$}
\Desc{$p$}{Poisoning ratio}
\Desc{$n$}{Size of training set}
\Desc{$D_+, D_-$}{Sampling oracle to distribution of instances with and without property $f$}
  \EndInput
  \Output
  \Desc{$\cT_\poison$}{Set of poisoning examples}
  \EndOutput
  \end{algorithmic}
\label{alg:A1}
\begin{algorithmic}[1]
    \If {$(t_0 + t_1)< 1$}
    \State{Sample $m=p\cdot{n}$ examples from $D_{+}$,\\ $$T=\set{(x_1,y_1),\dots, (x_{m},y_{m})}\gets D_+^{m}$$}
    \Else
    \State{Sample $m$ examples from $D_{-}$,\\ $$T=\set{(x_1,y_1),\dots, (x_{m},y_{m})}\gets D_-^{m}$$}
    \EndIf
    \State{$\alpha=\frac{\sum_{i=1}^{m}{y_i}}{m}$.}
    \If{$\alpha>0.5$}
    \State{$\cT_\poison = \set{(x_1,0),\dots,(x_{m},0)}$}
    \Else
    \State{$\cT_\poison = \set{(x_1,1),\dots,(x_{m},1)}$}
    \EndIf

    \State{Output $\cT_\poison$}
\end{algorithmic}
\end{algorithm}

\begin{algorithm}
\caption{Choosing the black box queries}
\label{alg:A2}
\small
\begin{algorithmic}
\Input
  \Desc{r}{number of models in ensemble}
    \Desc{q}{Number of black-box queries}
  \Desc{$D_+, D_-$}{Sampling oracle to distribution of instances with and without property $f$}

  \EndInput
  \Output
  \Desc{$\cT_q$}{Set of black-box queries}
  \EndOutput
  \end{algorithmic}
  \begin{algorithmic}[1]
    \State{Sample a thousand data sets $\cT_1,\dots, \cT_{r}$ each composed half of elements sampled from $D_{-}$ and half of elements sampled from $D_{+}$}. 
    \State{Train $M_1,\dots, M_{r}$ using $\cT_1,\dots,\cT_{r}$}.
    \State{Set $\cT_q=\emptyset$.}
    \While {$|\cT_q| < q$}
        \State{ sample $x\gets \frac{1}{2}D_- + \frac{1}{2}D_+$ }
        \State{if $$|1 - 2\frac{\sum_{i=1}^{r} M_i(x)}{r}|\leq 0.4$$
        $\cT_q = \cT_q \cup \set{x}.$}
    \EndWhile
    \State{Set $\cT_q=\cT_q\cup \cT_{\mathsf{poison}}$}
    \State{Output $\cT_q$ as the set of black box queries to make.}

\end{algorithmic}
\end{algorithm}
\small
\begin{algorithm}
\caption{Guessing the fraction of samples with property $f$}
\label{alg:A3}

\begin{algorithmic}
\Input
    \Desc{$f$}{The property being attacked}
  \Desc{$t_0,t_1$}{Possible fractions of instances with property $f$}
\Desc{$p$}{Poisoning ratio}
\Desc{$n$}{Size of training set}
\Desc{$k$}{Number of shadow models}
\Desc{$\cT_\poison$}{The set of poisoning data}
\Desc{$\cT_q$}{The set of black-box queries}
\Desc{$D_+, D_-$}{Sampling oracle to distribution of instances with and without property $f$}
  \EndInput
  \Output
  \Desc{$b'$}{A bit that predicts whether $t=t_0$ or $t=t_1$.}
  \EndOutput
  \end{algorithmic}
  \begin{algorithmic}[1]
    \State{ Sample data sets $\cT^{0}_1,\dots, \cT^{0}_{k}$ with size $n$ from $D_{t_0}$ and $\cT^{1}_1,\dots, \cT^{1}_{k}$ with size $n$ from $D_{t_1}$. (Note that the adversary can generate samples from these distributions given sampling access to $D_{-},D_{+}$: e.g. to sample from $D_{t_0}$, first choose bit $b$ from a distribution that is $1$ with probability $t_0$, then sample from $D_b$.)}
  
    \State{ Train $M^{0}_1,\dots, M^{0}_{k}$ using $\cT^{0}_1\cup T_\poison,\dots,\cT^{0}_{k}\cup T_\poison$ and $M^{1}_1,\dots, M^{1}_{k}$ using $\cT^{1}_1\cup T_\poison,\dots,\cT^{1}_{k}\cup \cT_\poison$.}
    \State{ Query all models on $T_q$ to get $R^{1}_1,\dots, R^{1}_{k}$ and $R^{0}_1,\dots, R^{0}_{k}$.}
    \State{ Construct a dataset $$\set{(R^{1}_1,1),\dots, (R^{1}_{k},1),(R^{0}_1,0),\dots, (R^{0}_{k},0)}$$ and train a linear model  with appropriate regularization on it to get $M_A$ (We use  $\ell_2$ regulizer with weight $2\cdot \sqrt{1/k}$)}

    \State{Query the target model on $\cT_q$ to get $R_q$, evaluate $M_A(R_q)$, and output the result.}
\end{algorithmic}
\end{algorithm}
\newcommand{\wb}{\mathsf{WBAttack}}
\section{Experimental Evaluation}\label{sec:expt}
Here we evaluate the performance and the accuracy of our attack described in Section~\ref{sec:concreteattack}.

\subsection{Experimental Setup}
\myparagraph{DataSets} We have run our experiments on two datasets:

\begin{itemize} 
\setlength{\itemsep}{0pt}
\setlength{\parskip}{0pt}
\setlength{\parsep}{0pt}
\item{\textbf{Census:}} The primary dataset that we use for our experiments is the US Census Income Dataset~\cite{frank2011uci}. The US Census Income Dataset contains census data extracted from the 1994 and 1995 population surveys conducted by the U.S. Census Bureau. This dataset includes 299,285
records with 41 demographic and employment related attributes
such as race, gender, education, occupation, marital status and
citizenship. The classification task is to predict whether a person
earns over \$50,000 a year based on the census attributes.\footnote{We used the census dataset as is, as compared to~\cite{PIA18, propinf18} 
where they preprocess the census dataset and run their experiments with balanced labels (50\% low income and 50\% high income). We notice that in the original dataset, the labels are not balanced (around 90\% low income and 10\% high income).
}
\item {\textbf{Enron:}} The second dataset that we use to validate our attack is the Enron email dataset~\cite{klimt2004enron}. This dataset contains 33717 emails. The classification task here was to classify an email as "Spam" or "Ham" based on the words used in the email.

\end{itemize}

\myparagraph{Target Property}  In Table~\ref{tab:targetfeatures}, we summarize the features we experimented with. In all these experiments, the attacker's objective is to distinguish between two possible values for the frequency of the target feature. Below is a summary list of all these properties.

\begin{table}[htbp]
\begin{center}
\scalebox{1}{
\begin{tabular}{|c|c|c|}
\hline
\textbf{DataSet} & \textbf{Target Feature} &\textbf{Distinguish between}\\
\hline
Census & Random binary & 0.05 vs 0.15 \\
\hline
Census & Gender & 0.6 vs 0.4 female\\
\hline
Census & Race & 0.1 vs 0.25 black\\
\hline
Enron & Random binary & 0.7 vs 0.3 \\
\hline
Enron & Negative sentiment & 0.10 vs 0.05 \\
\hline
\end{tabular}}
\caption{Target Features. For each feature we use two different ratios close to the actual ratios in the dataset (except for random features which are not present in the dataset) in our experiments. 
}
\label{tab:targetfeatures}
\end{center}
\end{table}

\begin{itemize}
\setlength{\itemsep}{0pt}
\setlength{\parskip}{0pt}
\setlength{\parsep}{0pt}
\item{\textbf{Random:}} To understand the power of this attack on a feature that is completely uncorrelated to the classification task (which one might naturally think  should not be leaked by an ideal model
), we did a set of experiments where we added a random binary feature to both Census and Enron datasets and set that as the target feature that the adversary wants to attack. Note that this feature is not correlated with any other feature in the dataset and the model should not depend on it to make its decision. This is backed up by our experiments in that, as we will see, the attack of \citep{PIA18}, which uses no poisoning 
does not perform better than random guessing on this property.

\item{\textbf{Gender:}} Gender is a boolean feature in census data which takes values "Male" and "Female". This feature was also used in the work of \citep{propinf18,PIA18}, so it allows us to directly compare our work with theirs.

\item{\textbf{Race:}} Another feature that we attack in the census dataset is Race. In our attack we tries to infer between two different ratios of "Black" race in the dataset. Again, we chose this for purposes of comparison with previous work.

\item{\textbf{Negative Sentiment:}} In one of our target properties, we try to infer the fraction of emails in the Enron email dataset that have negative sentiment. To do this, we use the sentiment analysis tool in {\it python nltk} to identify emails with positive and negative sentiment. Note that unlike all the other target properties, the negative sentiment 
feature is not present as a feature in the dataset which makes it intuitively seem harder for the attacker to infer. However, as we will see in our experiments, the attacker can still attack this property.
\end{itemize}

\subsection{Black-box queries}
As mentioned before, we are interested in the information leakage caused by black-box access to the model. Namely, the adversary can query the model on a number of points and infer information about the target property using the label prediction of the model on those queries (See Section \ref{sec:threat} for more details). In a concurrent work \citep{choo2020label}, also explored this kind of black-box access in the context of membership inference attacks. Our model does not require any other information other than predicted label (e.g. no confidence score, derivative of the model, etc). The query points are selected according to algorithm~\ref{alg:A2} in such a way that they have low certainty.

For Enron experiments, we use 500 query points and for census data experiments we use 1000 query points. The reason that we use different numbers for the Enron and census experiment is that Enron dataset contains fewer of ``uncertain'' points; we used almost all the points that fall into our range.

\subsection{Target model architectures}
Most of our experiments use logistic regression as the model architecture for training. The main reason we picked logistic regression was because it is much faster to train compare to Neural Networks. However, we also have a few experiments on the more complex models. In particular, we test our attack on fully connected neural networks with up to 6 hidden layers (See Table \ref{tab:modelcomplexity}). We note that since our attack is black-box, we do not need any assumption over the target model architecture other than the fact that it will have high accuracy (we still need the adversary to know the architecture for the attack to work.). This is again in contrast with the previous work of \citep{PIA18} 
that only works on fully connected neural networks. This is an important distinction as it shows that the attacker is not relying on extensive memorization due to large number of parameters.  

\subsection{Shadow model training}
Our shadow model training step is quite simple. As described in Section \ref{sec:concreteattack}, we train a series of shadow models with a fixed poisoning set. We hold out around $\frac{1}{3}$ of the dataset (For both Enron and Census datasets)  training the shadow models. This held out part will not effect any of the models that we test our attacker's accuracy on. After training the shadow models, we query them and train a simple linear attack model over the predictions on the queries. We use a linear model to train the attack model since our theoretical results suggest a simple linear model (which just takes the uniform average) over the queries would be enough to make a correct prediction.
We use $\ell_2$ regularization for our linear models to get better generalization and also reduce the number of effective queries as much as possible. Note that this choice of simple linear models is contrast with the attack of \citep{propinf18} and \citep{PIA18}.

that use complex models (e.g. set neural networks) to train the attack model; this is one of the reasons that our attack is faster.

\subsection{Performance of our attack}
In the following experiments, we evaluate the performance of our attack and compare it with the attack of~\cite{PIA18}. In the rest of the manuscript, we denote the attack of~\cite{PIA18} as $\wb$. \Snote{TODO: Add a a explanation of the $\wb$ attack}. We first evaluate how the different parameters, namely, poisoning rate, training set size, number of shadow models (defined in Sec~\ref{sec:threat}) and the complexity of the target model affect its accuracy. To understand the effect of each parameter, for each set of experiments, we fix a set of parameters and vary one. Unless otherwise stated, all of our experiments are repeated 5 times and the number reported is the average on all 5 repetitions.

\myparagraph{Poisoning Rate }
In Fig.~\ref{fig:logisticpoison}, we have 6 experiments where we fix the model to be logistic regression for all of them except one (Census random MLP) which uses a 5 layer perceptron with hidden layers sizes 32, 16 and 8. In all the experiments we set the number of shadow models to be 500 for census experiments and 1000 for Enron experiments. The training set size is 1000 for Census experiments and 2000 for Enron experiments.  We vary the poisoning rate from 0\% to 20\%. The number of black-box queries is set to 500 for experiments on Enron and 1000 for experiments on Census. The attack accuracy for all the target features is quite low when there is no poisoning. But with increase in poisoning rate, the attack accuracy dramatically improves and for  all features, the accuracy reaches around 0.9 with just 10\% poisoning.

\begin{figure}
\begin{center}
    \includegraphics[scale=0.6]{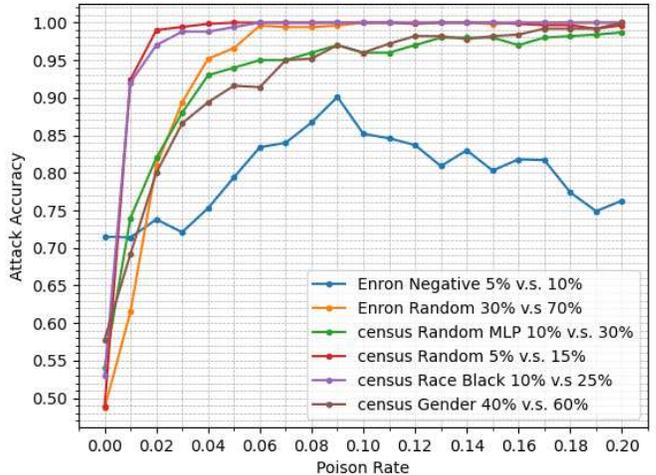}
    \caption{Poison rate vs attack accuracy. We change the poisoning budget from $0\%$ of the training size to $20\%$. Note that in most experiments, property inference  without poisoning is not effective, but with less than $10\%$ poisoning all of the experiments get accuracy above $90\%$. The curve marked as Census Random MLP shows the performance of our attack against MLP classifier on census dataset with random target feature. Also, observe that for one of the experiments (marked as Enron Negative), the accuracy of the attack starts decreasing after a certain level of poisoning, we discuss this phenomenon below.}
    \label{fig:logisticpoison}
\end{center}    
\end{figure}

The Enron negative sentiment experiment seems like an anomaly in Figure \ref{fig:logisticpoison}. However, the drop of accuracy with more poison points could be anticipated. We posit that for all features there is some point where adding more poisoning points will cause the accuracy to decrease.  To understand this, one can think about the extreme case where $100\%$ of the distribution is the poison data, which means there is no information about the clean distribution in the trained model and we would expect the attack to fail.

This effect is especially pronounced for properties that have very weak signal in the behavior of the final model.  The Enron negative sentiment property produces the weakest signal among all the experiments because (1) the feature does not exist in the dataset and (2) it has the smallest difference in percentage ($t_1-t_0$) among all the other experiments (5\% vs 10\%). We believe this is why it happens faster for the Sentiment property compared to other features. However, our insight suggests that this phenomenon should happen for any property for some poisoning ratio. To test this insight, we 

we tried various poisoning rates on the Enron dataset with random target feature. 
Figure \ref{fig:poisondrops} shows the result of this experiment where the accuracy of the attack starts to drop at poisoning rates around $30\%$. 
Interestingly, this phenomenon could be also explained using our Theorem as both ends of the range of certainty in the condition of our Theorem \ref{thm:theoritical_attack} will converge to $\infty$ when $p$ approaches $1$.
\begin{figure}
\begin{center}
    \includegraphics[scale=0.6]{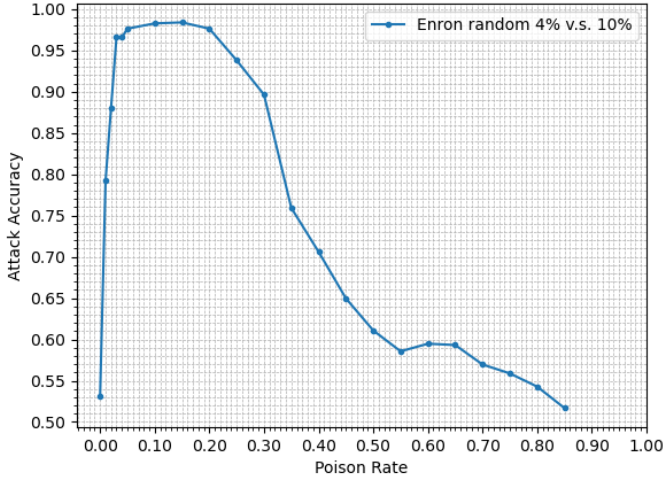}
    \caption{Poison rate vs attack accuracy. This experiment shows that more poisoning becomes ineffective after a certain point and actually decreases the performance of the attack. The optimal poisoning ratio could be different across different experiments. This drop of attack accuracy was also observed in one of the experiments of Figure \ref{fig:logisticpoison}. Note that adversary can choose to use fewer poisoning points to ensure that the attack is optimal. In this experiment, the number of shadow models is 400 and the training set size is 1000.}
    \label{fig:poisondrops}
\end{center}    
\end{figure}

\myparagraph{Number of Shadow Models} The next set of experiments (See Fig~\ref{fig:logisticshadow}) are to see the effect of the number of shadow models on the accuracy of the attack. For these experiments, we vary the number of shadow models from 50 to 2000. We notice that increasing the number of shadow models increases the attack accuracy and about 500 shadow models are sufficient to get close to maximum accuracy. Note that in this experiment we set the poisoning ratio to small values so that we can see the trend better. If larger poison ratio were chosen, in most experiments the attack reaches the maximum of 1 with very few shadow models and it is hard to see the trend. For instance, with $10\%$ percent poisoning, the experiments with random feature (both census and Enron) would reach $100\%$ accuracy with only 50 shadow models. This small number of shadow models makes the running time of the attack  lower. In all of the experiments in this figure, the dataset size is set to $1000$ except for the Enron negative sentiment experiment where the dataset size is $2000$.

\begin{figure}
    \begin{center}
    \includegraphics[scale=0.6]{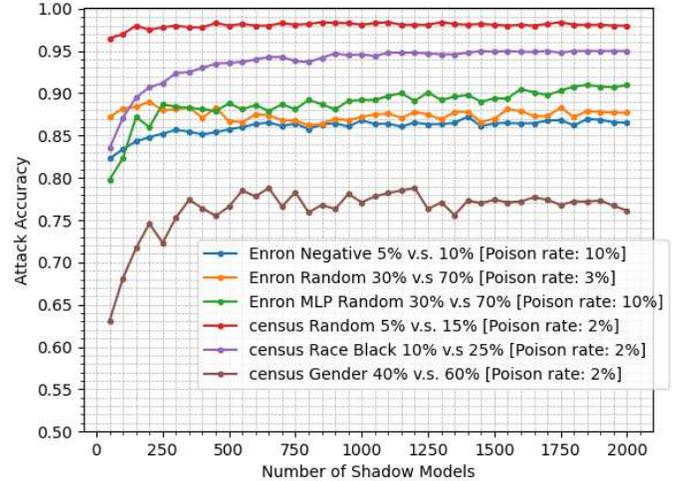}
    \caption{
    Number of shadow models v.s. attack accuracy. Our attack used shadow model training to find important queries. This figure shows that in all experiments, less than 500 shadow models is enough to get the maximum accuracy. Note that in this experiment we also  attack MLP classifier on Enron dataset for the random feature.}
    \label{fig:logisticshadow}
    \end{center}
\end{figure}

\myparagraph{Training Set size} In Fig.~\ref{fig:logisticsize}, we wanted to see the effect of training size on the effectiveness of the attack. Note that our theoretical attack suggests that larger training size should actually improve the attack because the models trained on larger training sets would have smaller generalization error and hence would be closer to a Bayes-optimal classifier. In Theorem \ref{thm:theoritical_attack} as the training set size increases, $\epsilon(n)$ and $\delta(n)$ will decrease which makes the attack more successful. In fact, our experiments verify this theoretical insight. In our experiments, we vary the training set size from 100 to 1500 and the upward trend is quite easy to observe.
 In this experiment we use 500 shadow models. Again, we have selected the poisoning rate and the number of shadow models in a way that the attack does not get accuracy 1.0 for small training sizes. In all experiments in this figure, the number of shadow models is set to 200. Also, the number of repetitions for this experiment is 2. 

\begin{figure}
    \begin{center}
    \includegraphics[scale=0.6]{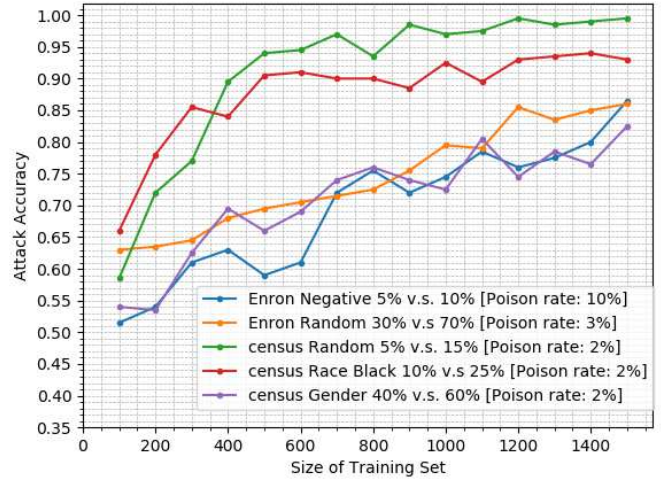}
    \caption{Training set size v.s. attack accuracy.
    As was observed in our theoretical analysis
    , the size of the training set can impact the performance of  our attack (because it affects the generalization error).}
    \label{fig:logisticsize}
    \end{center}
\end{figure}

\myparagraph{Undetectablilty of the Attack} Recall that in our threat model, the adversary is able to poison a fraction of the training data. If the target model quality degrades significantly due to this poisoning, then it becomes easily detectable. Therefore, for the effectiveness of this attack, it is important that the quality of the model does not degrade\footnote{Note that we are only considering undetectability via black box access. In a eyes-on setting where the training data can be looked at, it is possible to have countermeasures that would detect poisoning by looking at the training data}. We experimentally confirm that this is somewhat true with our poisoning strategy. See Fig~\ref{fig:logisticprecession} and Fig~\ref{fig:logisticrecall} for the precision and recall rate\footnote{Precision: $\frac{\text{True positive}}{\text{True positive} + \text{False positive}}$, Recall:$\frac{\text{True positive}}{\text{True positive} + \text{False negative}}$} 
 for the model Logistic Regression where the poisoning rate varies from 0\% to 20\% for training set size of 1000. In general, the experiments show that the precision tends to decrease  with a rather low slope and recall tend to increase by adding more poison data.
  The drop of precision and rise of recall can be explained by the fact that the poisoned model tends to predict label 1 more often than the non-poisoned model, because the poisoned data we add all has label 1.
 However, it also worth mentioning that for all experiments in Census data, 4-5\% poisoning is sufficient to get attack accuracy more than 90\%. This means that, if one considers the drop in precision versus the attack accuracy, the census data is not much worse than enron. In our experiments, the size of the training set for Census experiments is set to 1000 and for Enron experiments the training set size is 2000.

\begin{figure}
\begin{center}
    \includegraphics[scale=0.6]{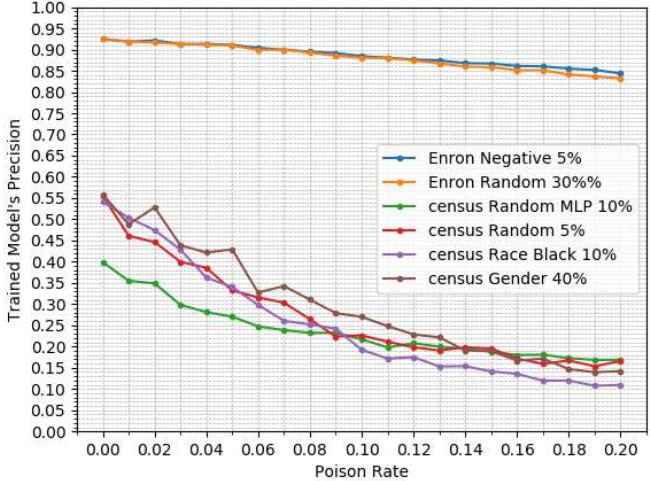}
    \caption{Precision vs poison rate. Note that we did not balance the labels in our experiments and this is why we have precision smaller than 50\% in the Census experiments.}
    \label{fig:logisticprecession}
    \end{center}
\end{figure}

\begin{figure}
\begin{center}
    \includegraphics[scale=0.6]{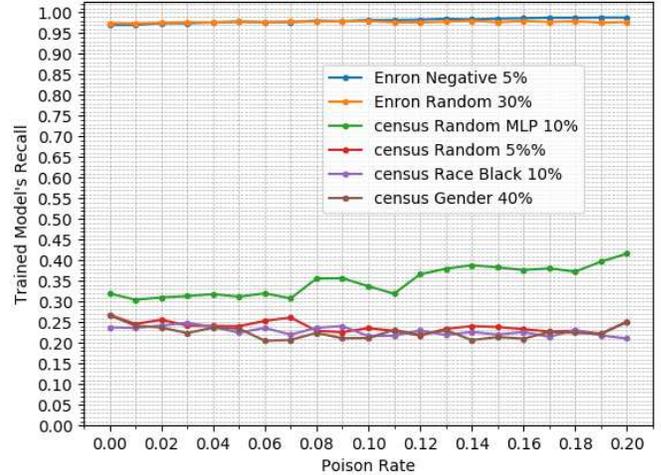}
    \caption{Recall v.s. poison rate. We change the poison rate and capture the recall of resulting models in different experiments. Note that we did not use balanced label fractions in our experiments.}
    \label{fig:logisticrecall}
\end{center}
\end{figure}

\myparagraph{Complexity of Target Models} While in most of our experiments we fix the target model to be logistic regression (except for few experiments named Census MLP and Enron MLP in the figures),
here we experiment with more complex architectures to see how our attack performs. We summarize the results in Table~\ref{tab:modelcomplexity}. Based on our theoretical analysis, the effectiveness of the attack should depend on the model's performance in generalization to the training data distribution. Therefore, we expect the effectiveness of the attack to drop with more complex networks as the generalization error would increase when the number of parameters in the model increases. This might sound counter intuitive as the privacy problems are usually tied with over fitting and unintended memorization\cite{secretsharer}. However, our experiments verify our theoretical intuition. We observe that as we add we more layers, the accuracy of the attack tends to drop. However, we would expect this to change with larger training set sizes as the larger training size could compensate for the generalization error caused by higher numbers of parameters and overfitting.  For instance, in the last row of Table~\ref{tab:modelcomplexity} the accuracy increases significantly when we set the training size to 10000 and use more shadow models.

\begin{table*}[htbp]
\begin{center}
\scalebox{1}{
\begin{tabular}{ | c | c | c | c | c | c |}
\hline
\multicolumn{3}{| c |}{\textbf{Architecture}}  & \multicolumn{2}{ c |}{\textbf{Performance}}\\ 
\cline{1-5}
\textbf{Hidden Layers}   & \textbf{Layer sizes} & \textbf{Training Size}  & \textbf{Attack Accuracy} & \textbf{Shadow Models}\\
\hline
 1 & [2] &  1000 & $1.0$ & 600\\\hline
 2 & [4 2] & 1000 &  $0.97$  & 600 \\\hline
 3 & [8 4 2] & 1000 & $0.94$  & 600 \\\hline
 4 & [16 8 4 2] & 1000 & $0.88$  & 600 \\\hline
 5 & [32 16 8 4 2] & 1000 & $0.81$  & 600 \\\hline
 5 & [32 16 8 4 2] & 10000 & $0.92$  & 1000 \\\hline 
\end{tabular}}
\caption{Complexity of target models vs attack accuracy on Census Data.}
\label{tab:modelcomplexity}
\end{center}
\end{table*}

\begin{table*}[!t]
\begin{center}
\scalebox{1}{
\begin{tabular}{| c | c | c | c | c | c | c | c | }
\hline
\multicolumn{2}{| c |}{\textbf{Experiment parameters}}  & \multicolumn{2}{ c |}{\textbf{White-box Performance \cite{PIA18}}} & \multicolumn{3}{ c |}{\textbf{Black-box Performance}}\\ 

 \textbf{Feature} &  \textbf{Training Size} & \textbf{\# Shadow Models}   & \textbf{Accuracy} & \textbf{\# Shadow Models } & \textbf{Poison} & \textbf{Accuracy}  \\
\hline
 Census-Random &  $1000$ & 1000 & $.52$ & $1000$ & $0$ &  .5\\ \hline
 Census-Gender & $1000$ & 1000  & $.96$ & 1000 & $0$ & .61\\ \hline
 Census-Race &  $1000$ & 1000 & $.95$ & 1000 &  $0$ & .55\\ \hline
 \hline
 Census-Random &  $1000$ & 1000 & $.52$ & $100$ & $0.05$ & $1.0$ \\ \hline
 Census-Gender & $1000$ & 1000  & $.96$ & 100 & $0.03$ & $.99$ \\ \hline
 Census-Race &  $1000$ & 1000 & $.95$ & 100 &  $0.05$ & $.97$\\ \hline
  \hline
  Census-Random &  $1000$ & 1000 & $.52$ & $50$ & $0.1$ & $1.0$ \\ \hline
 Census-Gender & $1000$ & 1000  & $.96$ & 50 & $0.1$ & $1.0$ \\ \hline
 Census-Race &  $1000$ & 1000 & $.95$ & 50 &  $0.1$ & $.98$\\ \hline
 
\end{tabular}}
\end{center}
\caption{Comparison with the white-box attack of \cite{PIA18}}
\label{tab:compareLR}
\end{table*}

\subsection{Comparison with $\wb$ \cite{PIA18}}

Since the work closest to ours is $\wb$, even though it is a white-box attack, we experimentally compare the performance of $\wb$ to ours.  $\wb$ is an improved version of property inference attack of \citep{propinf18}, where instead of a simple white-box shadow model training on the parameters of neural networks (which is done in \citep{ateniese2013hacking}), they first try to decrease the entropy of the model parameters by sorting the neural network neurons according to the size of their input and output. They show that this reduction in randomness of the shadow models can increase the accuracy of attack, for the same number of shadow models. This attack is called the \emph{vector} attack in \citep{PIA18}.  
In Table~\ref{tab:compareLR}, we see how our \emph{black-box attack performance} compares with $\wb$. Notice that black-box with no poisoning (first 3 rows of the table) performs much worse that $\wb$ on race and gender. However, $\wb$ performs poorly on the random feature. In fact, the strength of our attack is to find a way to infer information about features similar to random that do not have high correlation with the label (and this is exactly what $\wb$ is unable to attack), and do it in a fully black-box way. As we see in the columns below, with very small ratio of poisoning our attack get accuracy $1.0$ on the random target property. It also beats the performance of $\wb$ on other experiments with a very small number of poisoning points. Note that our attack also requires many fewer shadow models. 
For example with $10\%$ poisoning, only 50 shadow models in our attack would beat the accuracy of $\wb$ which uses $1000$ shadow models. The small number of shadow models would be important in scenarios where the adversary does not have access to a lot of similar data. So in summary, our attack can improve on the performance $\wb$ both in accuracy and number of shadow models, and of course in the access model which is black box.  The cost of these improvements is allowing the adversary to choose a fraction of training set which is not an uncommon scenario in multi-party learning applications. To compare the running time of the attacks, for the Census-Gender experiment in Table \ref{tab:compareLR}, $\wb$ ran in $1533$s compared to our black-box attack that only took $161$s on the same platform and execution environment. Note that we used the exact same size of held-out data for shadow models in both black-box and white-box experiments.

\section{Conclusion and Future Work}\label{sec:conclusion}
Poisoning attacks have emerged as a threat against security of machine learning. These poisoning attacks are usually studied in machine learning security where the goal of adversary is to increase the error or inject backdoor to the model. In this work, we initiated the study of poisoning adversaries that instead seek to the increase information leakage of trained models. We first presented a theoretical attack which succeeds against any Bayes optimal classifier. Then, we verified the effectiveness of out attack experimentally by implementing it and testing it against different datasets and classification tasks. Our results show that it does indeed succeed with very high probability. However, in our attack as well as in all the previous property attacks in the literature~\cite{PIA15, PIA18}, the attacker is assumed to have sample access to the underlying distributions $D_{-}$ and $D_{+}$. It would be interesting to remove/relax this assumption in future work.

 In this work, we tried to take a principled approach where experiments demonstrate the realistic threat of our attack, whereas the theoretical analysis gives intuition for the attacks. The theoretical analysis will hopefully also provide suggestion for where we might look in the future to either strengthen the attack or find mitigation. We leave this direction as future work.

\newpage
\begingroup
\raggedright
\emergencystretch 1.5em
\printbibliography
\endgroup
\clearpage

\appendix
\section{Omitted Proofs}
In this section we prove the tools we used to prove our main Theorem. Specifically, We prove Claim \ref{clm1}, Corollary \ref{cor1} and Claim \ref{clm2}.
\begin{proof}[Proof of Claim \ref{clm1}] We have
    \begin{align}\nonumber
        &\Pr[\tY=1 \mid \tX = x]\\\nonumber
        &~~~= \Pr[\tY=1 \mid \tX = x \ANDT E]\cdot \Pr[E \mid \tX = x ] \\\nonumber
        &~~~~~~+ \Pr[\tY=1 \mid \tX = x \ANDT \bar{E}]\cdot \Pr[\bar{E} \mid \tX = x ]\\
       &~~~= \Pr[E \mid \tX = x ] 
        + \Pr[Y=1 \mid X = x]\cdot \Pr[\bar{E} \mid \tX = x ]\label{eq-3}
    \end{align} 

 Now we should calculate the probability $\Pr[E  \mid \tX = x]$ to get the exact probabilities. We have
    \begin{align}
\nonumber &\Pr[E \mid \tX = x]\\\nonumber
&~~~= \frac{\Pr[\tX = x \mid E]\cdot \Pr[E]}{\Pr[\tX = x \mid E]\cdot \Pr[E] +\Pr[\tX = x \mid \Bar{E}]\cdot \Pr[\Bar{E}]}\\
&~~~= \frac{\Pr[X_+=x]\cdot p}{\Pr[X_+ = x]\cdot p + \Pr[X=x]\cdot(1-p)}\label{eq0}
\end{align}


Now for all $x\in\cX$ such that $f(x)=1$ we have 
\begin{equation}\label{eq3}\Pr[X = x] = t\cdot \Pr[X_{+} = x].\end{equation}
and for all $x\in\cX$ such that $f(x)=0$ we have 


Now combining Equations \ref{eq0} and \ref{eq3}, for all $x\in\cX$ such that $f(x)=1$ we have 
\begin{equation}\label{eq6}\Pr[E \mid \tX = x]=\frac{ p}{ p + t\cdot (1-p)}.\end{equation}

Combining equations \ref{eq-3} and \ref{eq6} 
we get \esha{do we need \ref{eq4} for this?}

\begin{align*}\Pr[\tilde{Y}=1 \mid \tilde{X}=x] &= \frac{p}{p+t(1-p)}\\
&~~~+ \frac{t(1-p)}{p+t(1-p)}\cdot \Pr[Y=1\mid X=x]
\end{align*}

which finishes the proof.
\end{proof}

\begin{proof}[Proof of Corollary \ref{cor1}]
if $\crt(x)\leq  \frac{p-2\tau\cdot t}{t(1-p)}$ 
then by Claim \ref{clm1} we have
\begin{align*}&\Pr[\tilde{Y}=1 \mid \tilde{X}=x]\\ &= \frac{p}{p+t(1-p)}+ \frac{t(1-p)}{p+t(1-p)}\cdot \Pr[Y=1\mid X=x]\\
&=\frac{p}{p+t(1-p)} +  \frac{t(1-p)}{p+t(1-p)}\cdot (\frac{1-\crt(x)}{2})\\
&\geq \frac{p}{p+t(1-p)}
+ \frac{t(1-p)}{p+t(1-p)}\cdot (\frac{t(1-p) -p +2t\tau }{2t(1-p)})\\ 
&=\frac{t(1-p)+p +2\tau t}{2(p+t(1-p))}\\
&= \frac{1}{2} +\frac{\tau t}{p+t(1-p)}.
\end{align*}
To show the other direction, we can follow the exact same steps in  the opposite order.
\end{proof}

\begin{proof}[Proof of Claim \ref{clm2}]
For all $x\in \cX$ such that $C_\tau(x)=1$, using Corollary \ref{cor1}, if $t= t_1$ then we have
\begin{equation}\label{eq20}\Pr[\tilde{Y}=1\mid \tX=x]< 0.5-\frac{\tau\cdot t_1}{p + (1-p)\cdot t_1}\end{equation}

and if $t=t_0$ then 
\begin{equation}\label{eq21}\Pr[\tY=1\mid \tX=x]\geq 0.5 +\frac{\tau\cdot t_0}{p + (1-p)\cdot t_0}\end{equation} 
This implies that for both cases of $t=t_0$ and $t=t_1$ we have 
\begin{equation}\label{eq27}|\crt(x,\tD)|\geq \frac{2\tau t}{p + (1-p)\cdot t}\end{equation}
And it also implies that for case of $t=t_0$ we have 

\begin{equation}\label{eq28}h^*(x,\tD)=1 \end{equation}
and for $t=t_1$ we have
\begin{equation}\label{eq29}h^*(x,\tD)=0.\end{equation}

Now we state a lemma that will be used in the rest of the proof:

\begin{lemma}\label{lem2}
For any distribution $D\equiv (X,Y)$ where $\Supp(Y)=\set{0,1}$ and any classifier $h:\Supp(X)\to \set{0,1}$ we have:
$$\Risk(h,D) = \Bayes(D) + \Ex_{x\gets X}\left[|\tilde{h}x)-h^*(x,D)|\cdot |\crt(x,D)|\right]$$
where $h^*$ is the Bayes-Optimal classifier as defined in Definition \ref{def:bayes}.
\end{lemma}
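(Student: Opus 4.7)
The plan is to do a pointwise comparison of the conditional risk of $h$ with that of $h^*$ at each $x$, and then take expectation over $x \gets X$. The algebra is elementary; the only thing worth tracking carefully is signs, since $\crt(x,D)$ can be either positive or negative.

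First I would rewrite the conditional risk of an arbitrary $\{0,1\}$-valued classifier $g$ at a point $x$ in terms of $\crt(x,D)$. Using $\Pr[Y=1\mid X=x] = (1-\crt(x,D))/2$ and $\Pr[Y=0\mid X=x] = (1+\crt(x,D))/2$, one gets
\[
\Pr[g(X) \neq Y \mid X=x]
= \frac{1 + (2g(x)-1)\,\crt(x,D)}{2}.
\]
Specializing to $g=h^*$ and using the defining property of the Bayes-optimal classifier (namely $h^*(x)=1$ iff $\crt(x,D)<0$, and $h^*(x)=0$ iff $\crt(x,D)>0$; ties are irrelevant since they contribute $0$), this reduces to
\[
\Pr[h^*(X)\neq Y \mid X = x] = \frac{1-|\crt(x,D)|}{2}.
\]

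Next I would subtract the two conditional risks. If $h(x) = h^*(x)$, the difference is clearly $0$. If $h(x)\neq h^*(x)$, plugging the formulas above into $\Pr[h(X)\neq Y\mid X=x] - \Pr[h^*(X)\neq Y\mid X=x]$ and checking both sub-cases ($h^*(x)=0,\ h(x)=1$ with $\crt(x,D)\ge 0$, and $h^*(x)=1,\ h(x)=0$ with $\crt(x,D)\le 0$) yields $|\crt(x,D)|$ in each case. Since $|h(x)-h^*(x)|$ equals $1$ exactly when the two disagree and $0$ otherwise, the pointwise excess risk is
\[
\Pr[h(X)\neq Y\mid X=x] - \Pr[h^*(X)\neq Y\mid X=x] = |h(x)-h^*(x)|\cdot|\crt(x,D)|.
\]

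Finally, I would take expectation over $x \gets X$ using the tower property, which turns the left-hand side into $\Risk(h,D) - \Bayes(D)$ and the right-hand side into the claimed expectation. The only mild subtlety is the handling of ties $\Pr[Y=1\mid X=x]=1/2$, where the Bayes-optimal classifier is not unique; but at such $x$ we have $\crt(x,D)=0$, so the integrand on the right vanishes and the identity remains valid for any choice of $h^*$ among the Bayes-optimal classifiers. There is no substantial obstacle; this is essentially a bookkeeping lemma.
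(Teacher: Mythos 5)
Your proposal is correct and follows essentially the same route as the paper: both reduce to the pointwise identity that the excess conditional risk at $x$ equals $|h(x)-h^*(x)|\cdot|\crt(x,D)|$ and then integrate over $x\gets X$. The paper organizes the bookkeeping by conditioning the global expectation on the event $h(X)=h^*(X)$ and evaluating $\Ex_{y\gets Y\mid X=x}[1-2|y-h^*(x)|]=|\crt(x,D)|$ at the end, whereas you write the conditional risk of an arbitrary classifier in closed form first and subtract; this is only a cosmetic difference, and your explicit handling of the tie case $\crt(x,D)=0$ is a small point the paper glosses over.
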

\begin{proof} For simplicity, in this proof we use $h^*(x)$ instead of $h^*(x,D)$. We have
\begin{align*}
    &\Risk(h,D)\\
    &=\Ex_{(x,y)\gets(X,Y)}[\tilde{h}(x)\neq y]\\
    &=\Ex_{(x,y)\gets(X,Y)}[h^*(x)\neq y \mid \tilde{h}(x)=h^*(x)]\Pr[\tilde{h}(X)= h^*(X)]\\
    &+\Ex_{(x,y)\gets(X,Y)}[h^*(x)= y \mid \tilde{h}(x)\neq h^*(x)]\Pr[\tilde{h}(X)\neq h^*(X)]\\
    &=\Bayes(D)\\
    &- \Ex_{(x,y)\gets(X,Y)}[h^*(x)\neq  y \mid \tilde{h}(x)\neq h^*(x)]\Pr[\tilde{h}(X)\neq h^*(X)]\\
    &+\Ex_{(x,y)\gets(X,Y)}[h^*(x)= y \mid \tilde{h}(x)\neq h^*(x)]\Pr[\tilde{h}(X)\neq h^*(X)]\\
    &=\Bayes(D)\\
    &- \Ex_{(x,y)\gets(X,Y)}[|y-h^*(x)||\tilde{h}(x)-h^*(x)|]\\
    &+ \Ex_{(x,y)\gets(X,Y)}[(1-|y-h^*(x)|)|\tilde{h}(x)-h^*(x)|]\\
    &=\Bayes(D) + \Ex_{(x,y)\gets(X,Y)}[(1-2|y-h^*(x)|)|\tilde{h}(x)-h^*(x)|]\\
    &=\Bayes(D) + \Ex_{x\gets X}[|\tilde{h}(x)-h^*(x)|\Ex_{y\gets Y|X=x}[1-2|y-h^*(x)|]]
\end{align*}
Now we show that $\Ex_{y\gets Y|X=x}[1-2|y-h^*(x)|]=|\crt(x)|$. The reason is that, if $\Pr[Y=1|X=x]\geq 0.5$ then $h^*(x)=1$ and we have $\Ex_{y\gets Y|X=x}[1-2|y-h^*(x)|]=2\Ex_{y\gets Y|X=x}[y]-1 = |\crt(x)|$. And if $\Pr[Y=1|X=x]< 0.5$ then $h^*(x)=0$ and we have $\Ex_{y\gets Y|X=x}[1-2|y-h^*(x)|]=1-2\Ex_{y\gets Y|X=x}[y] = |\crt(x)|$. 
Therefore, the proof is complete.
\end{proof}
Now we are ready to complete the proof. Assume that we have $t=t_0$ and
\begin{equation}\label{eq22}
    \Pr_{x\gets X|C_\tau(x)=1}[\tilde{h}(x)=1] < 0.5+\gamma
\end{equation} 
let $\alpha=\Risk(\tilde{h},D)-\Bayes(D)$. \melissa{$\tilde{h}$?  Also below?}By Lemma \ref{lem2} we have

\begin{align}
    &\alpha = \Ex_{x\gets \tX}\left[|\tilde{h}(x)-h^*(x,\tD)|\cdot |\crt(x,\tD)|\right]\nonumber\\
    &\geq \Ex_{x\gets \tX}\left[|\tilde{h}x)-h^*(x,\tD)|\cdot |\crt(x,\tD)| \mid C_\tau(x) = 1\right]\cdot \Pr[C_\tau(\tX)=1]\nonumber\\
    &\geq (\Ex_{x\gets \tX\mid C_\tau(\tX)=1}[1-\tilde{h}(x)])\cdot\frac{2\tau t_0}{p + (1-p)\cdot t_0}\cdot\Pr[C_\tau(\tX)=1].\label{eq39}
    \end{align}
    Note that the last line above is derived by combining Equations \ref{eq27} and \ref{eq28}.
   
    Now using our assumption in Equation \ref{eq22} we have
    
    \begin{align*}
    \alpha>  \frac{2(0.5-\gamma)\tau\cdot t_0}{(p+(1-p)t_0)}\cdot \Pr[C_\tau(\tX)=1]
    \end{align*}
    Now we note that for any $x$ such that $f(x)=1$ we have  $$\Pr[\tX=x] = \frac{p + (1-p)t}{t}\cdot \Pr[X=x])$$ because of the way poisoning is done
    \footnote{For simplicity we are assuming the support of distribution is discrete. Otherwise, we have to have the same argument for measurable subsets instead of individual instances.}. Therefore we have 
    \begin{equation}\label{eq40}\Pr[C_\tau(\tX)=1]=\frac{p + (1-p)t}{t}\cdot \Pr[C_\tau(X)=1])\end{equation}
    
    and we get
    \begin{align*}
      \alpha &> \frac{2(0.5-\gamma)\tau\cdot t_0}{(p+(1-p)t_0)}\cdot\frac{p + (1-p)t_0}{t_0}\cdot \Pr[C_\tau(X)=1])\\
      &=(1-2\gamma)\cdot\tau\cdot \Pr[C_\tau(X)=1])\\
      &\geq (1-2\gamma)\cdot\tau\frac{\epsilon(n)}{\tau(1-2\gamma)}=\epsilon(n).
\end{align*}
This means that if the assumption of Equation \ref{eq22} holds then the error would be larger than $\epsilon(n) + \Bayes(\tD)$ which means the probability of \ref{eq22} happening is at most $\delta(n)$  by Bayes-optimality of the learning algorithm. Namely,
\begin{align*}
             \Pr_{\substack{S\gets \tilde{D}^n\\
             \tilde{h}\gets L(S)}}\left[\Pr_{x\gets X|C_\tau(x)=1}\left[\tilde{h}(x) =1\right] \geq0.5 +\gamma\right]\geq 1-\delta(n).
   \end{align*}
In case of $t=t_1$, the proof is similar. We first assume 
\begin{equation}\label{eq30}
    \Pr_{x\gets X|C_\tau(x)=1}[\tilde{h}(x)=0] > 0.5 -\gamma
\end{equation}
Using Equations \ref{eq27} and \ref{eq29} we get the following (similar to Inequality \ref{eq39} for $t=t_0$)
\begin{equation}\alpha>(\Ex_{x\gets \tX\mid C_\tau(\tX)=1}[\tilde{h}(x)])\cdot\frac{2\tau t_1}{p + (1-p)\cdot t_1}\cdot\Pr[C_\tau(\tX)=1].\label{eq31}
  \end{equation}
Therefore, using Equations \ref{eq40} and \ref{eq30} we get
$$\alpha> \epsilon(n) $$
which implies
  \begin{align*}
             \Pr_{\substack{S\gets \tilde{D}^n\\
             \tilde{h}\gets L(S)}}\left[\Pr_{x\gets X|C_\tau(x)=1}\left[\tilde{h}(x) =1\right] \leq 0.5 -\gamma\right]\geq 1-\delta(n).
   \end{align*}
\end{proof}

\typeout{get arXiv to do 4 passes: Label(s) may have changed. Rerun}
\end{document}